\theoremstyle{definition}
\setlist[enumerate]{leftmargin=0.5cm,topsep=0pt,itemsep=-2pt}
\setlist[itemize]{leftmargin=0.5cm,topsep=0pt,itemsep=-2pt}
\newcommand{\probspace}{\mathscr{P}}
\Crefname{assumption}{Assumption}{Assumptions}
\Crefname{equation}{Equation}{Equations}
\icmltitlerunning{Towards a Better Understanding of Representation Dynamics under TD-learning}
\begin{document}

\twocolumn[
\icmltitle{Towards a Better Understanding of Representation Dynamics under TD-learning}



\icmlsetsymbol{equal}{*}

\begin{icmlauthorlist}
\icmlauthor{Yunhao Tang}{dm}
\icmlauthor{R\'emi Munos}{dm}
\end{icmlauthorlist}

\icmlaffiliation{dm}{Google DeepMind}

\icmlcorrespondingauthor{Yunhao Tang}{robintyh@deepmind.com}

\icmlkeywords{Machine Learning, ICML}

\vskip 0.3in
]



\printAffiliationsAndNotice{\icmlEqualContribution} 

\begin{abstract}
TD-learning is a foundation reinforcement learning (RL) algorithm for value prediction. Critical to the accuracy of value predictions is the quality of state representations. In this work, we consider the question: how does end-to-end TD-learning impact the representation over time? Complementary to prior work, we provide a set of analysis that sheds further light on the representation dynamics under TD-learning. We first show that when the environments are reversible, end-to-end TD-learning strictly decreases the value approximation error over time. Under further assumptions on the environments, we can connect the representation dynamics with spectral decomposition over the transition matrix. This latter finding establishes fitting multiple value functions from randomly generated rewards as a useful auxiliary task for representation learning, as we empirically validate on both tabular and Atari game suites.
\end{abstract}

\section{Introduction}

Temporal difference (TD) learning is a foundational algorithm for predicting value functions in reinforcement learning (RL) \citep{sutton1988learning}. In practice, computations of the value predictions depend on how the state is represented, a quantity formally known as state \emph{representation}. In classic settings such as linear TD, the representations are often human designed and fixed throughout learning. In this case, the quality of the best possible value predictions depends critically on the quality of the fixed representations. Good representations should share useful information across states that entail accurate value predictions (see, e.g., \citep{tsitsiklis1996analysis,munos2003error,behzadian2019fast} for some example characterizations).

Any fixed representation is potentially sub-optimal, as it does not adapt to the underlying learning algorithm. To alleviate such issues, in practice, it is now common to perform gradient-based updates on the representations also with TD-learning. Recently, such an \emph{end-to-end} approach 
to representation learning has led to much empirical success and is the core of many high-performing agents such as DQN \citep{mnih2013}. A natural question ensues: can we characterize the representation learned by such end-to-end updates?

The answer to this question has been attempted by a number of prior work, including the study of the convergence of end-to-end TD-learning under the over-parameterized regimes, i.e., when the value functions are learned by very wide neural networks \citep{cai2019neural,zhang2020can,agazzi2022temporal,sirignano2022asymptotics}; the study of TD-learning dynamics under smooth homogeneous function approximation, e.g., with ReLU networks \citep{brandfonbrener2019geometric}; the study of representation dynamics under TD-learning with restrictive assumptions on the weight parameter \citep{lyle2021effect}. See \cref{sec:discussion} for an in-depth discussion about this paper's relation to prior work.

In this work, we provide a set of analysis complementary to prior work, which hopefully sheds light on how TD-learning  impacts the evolution of representation over time. We consider the natural extension of the linear TD case \citep{sutton1988learning,tsitsiklis1996analysis}, where the value function is parameterized in a bi-linear way 
\begin{align*}
   V = \Phi w.
\end{align*}
Here, $\Phi$ is the representation, which is kept fixed in linear TD. We study the evolution of the representation under the general TD-learning updates. Our contributions consist in providing characterizations of the representation dynamics, through a few angles. 
\paragraph{Improving value prediction accuracy over time.} When assuming the Markov chain is reversible, we show that the value prediction error, as measured by the difference $\Phi w - V^\pi$, strictly decreases over time (\cref{sec:value}). This solidifies the intuition that
allowing updating representations during TD-learning should improve upon classic TD-learning where representations are fixed.

\paragraph{Spectral decomposition of the transition matrix .} We show that when the transition matrix is symmetric and under certain conditions of reward functions,  TD-learning learns provably useful representations (\cref{sec:spectral}). Formally, the representation dynamics executes gradient-based spectral decomposition on the transition matrix. 

\paragraph{Random value predictions learn useful representations.} 
As a corollary of the previous results, we show that predicting multiple value functions generated via randomly sampled rewards is an auxiliary task that helps learn useful representations. We validate this theoretical insight with tabular and deep RL experiments over Atari game suites.

\section{Background}\label{sec:background}

Consider a Markov decision process (MDP) represented as the tuple $\left(\mathcal{X},\mathcal{A},p,p_R,\gamma\right)$ where $\mathcal{X}$ is a finite state space, $\mathcal{A}$ the finite action space,  $p:\mathcal{X}\times\mathcal{A}\rightarrow\probspace(\mathcal{X})$ the transition kernel, $p_R:\mathcal{X}\times\mathcal{A}\rightarrow\mathbb{R}^h$ the reward kernel, and $\gamma\in [0,1)$ the discount factor. In the traditional setting, the reward is scalar such that $h=1$, though in general it can be extended to multiple dimensions $h\geq 1$ \citep{sutton2011horde}. Let $\pi:\mathcal{X}\rightarrow\probspace(\mathcal{A})$ be a fixed policy. For convenience, let $P^\pi:\mathcal{X}\rightarrow\probspace(\mathcal{X})$ be the state transition kernel induced by the policy $\pi$ such that $P^\pi(x,y)=\sum_a \pi(a|x)p(y|x,a)$. Denote the state-dependent reward function as $R^\pi\in\mathbb{R}^{|\mathcal{X}|\times h}$ such that $R^\pi(x)=\sum_a r(x,a)\pi(a|x)\in\mathbb{R}^h$. Throughout, we will focus on the Markov reward process under $P^\pi$ and $R^\pi$ as this simplifies discussions. 

When the context is clear, we overload the notation and let $x$ be an one-hot encoding of the state too. A state representation $\phi_x\in\mathbb{R}^k$ is defined by a mapping from the state space $\mathcal{X}$ to the $k$-dimensional Euclidean space.  To characterize representations, we consider a matrix $\Phi\in\mathbb{R}^{|\mathcal{X}| \times k}$ from which the representation for state $x$ can be calculated as $\phi_x\coloneqq\Phi^T x\in\mathbb{R}^k$. 
Throughout, we assume $k\leq|\mathcal{X}|$ where $|\mathcal{X}|$ is the cardinal of $\mathcal{X}$, as this tends to be the case in practice. 
Good representations should entail sharing information between states, and facilitate downstream tasks such as policy evaluation or control. In classic TD-learning settings, representations are fixed, whereas in practice, representations are also shaped by incremental updates, as we detail below.
 
\subsection{TD-learning with linear function approximations}
Given the representation $\phi_x\in\mathbb{R}^k$ at state $x$, TD-learning with linear function approximation parameterizes a linear function with weight $w\in\mathbb{R}^{k \times h}$, such that the prediction $\phi_x^Tw$ approximates the value function $V^\pi\coloneqq(I-\gamma P^\pi)^{-1}R^\pi\in\mathbb{R}^{|\mathcal{X}|\times h}$.  In most applications, there is a single reward $h=1$; here, we consider the most general case $h\geq 1$ as this helps facilitate ensuing discussions.
For any $1\leq  i\leq h$, we can understand the $i$-th column of the weight vector $w_i$ as approximating the value function $V_i^\pi\in\mathbb{R}^{|\mathcal{X}|}$ derived from the $i$-th column of the reward matrix $R_i^\pi\in\mathbb{R}^{|\mathcal{X}|}$. 

The aim of TD-learning is to adjust the weight parameter $w$ such that the approximation is accurate. We start with the classic linear TD-learning setting where the representation $\Phi$ is fixed.

Given a state $x_t$ and its sampled next state $x_{t+1}\sim P^\pi(\cdot|x_t)$, TD-learning constructs the bootstrapped back-up target $R^\pi(x_t) + \gamma \phi_{x_{t+1}}^T w$ and seeks to minimize the squared prediction error 
\begin{align*}
    \left\lVert R^\pi(x_t) + \gamma\phi_{x_{t+1}}^Tw-\phi_{x_t}^Tw\right\rVert_2^2. 
\end{align*} 
TD-learning is defined through the \emph{semi-gradient} update, 
which can be understood as gradient descent on the modified squared prediction error with a stop gradient operation on the back-up target,
\begin{align}
    L(\Phi,w) \coloneqq \left\lVert \text{sg}\left(R^\pi(x_t) + \gamma \phi_{x_{t+1}}^T w\right)-\phi_{x_t}^Tw\right\rVert_2^2. \label{eq:sg-squared-error}
\end{align} 
Here we omit the loss function's dependency on the sampled transition for simplicity. Then, the weight is updated as
\begin{align}
    w_{t+1} =  w_t - \eta_w\partial_{w_t}L(\Phi,w_t),\label{eq:linear-td}
\end{align}
with learning rate $\eta_w\geq 0$.
Throughout, we assume the state $x_t$ are drawn from the stationary distribution $d^\pi\in\mathbb{R}^{|\mathcal{X}|}$ of the Markov matrix $P^\pi$. We let $D^\pi\in\mathbb{R}^{|\mathcal{X}|\times|\mathcal{X}|}$ denote the diagonal matrix constructed from $d^\pi$. Assuming $\Phi$ is of rank $k$ and $d^\pi>0$, it has been proved that under mild conditions on the learning rate and the representation matrix, linear TD-learning converges to the unique fixed point \citep{tsitsiklis1996analysis}
\begin{align}
    w_\Phi^\ast \coloneqq \left(\Phi^T D^\pi(I-\gamma P^\pi) \Phi\right)^{-1} \Phi^T D^\pi R^\pi.\label{eq:linear-td-solution}
\end{align}

The primary approach taken by the seminal work of \citep{tsitsiklis1996analysis} is to understand  linear TD-learning through the continuous time behavior of the expected linear TD updates, characterized by an ordinary differential equation (ODE). Now, we overload the notation and let $w_t$ be the weight parameter indexed by continuous time $t\geq 0$. The continuous time behavior of linear TD is characterized by the following ODE 
\begin{align}
   \frac{dw_t}{dt} = \eta_w\cdot \Phi^T D^\pi \left(R^\pi - \left(I-\gamma P^\pi\right)\Phi w_t\right). \label{eq:linear-td-ode} 
\end{align}
Here, $\eta_w>0$ is the learning rate.
We can verify that $w_\Phi^\ast$ is the unique fixed point to \cref{eq:linear-td-ode}.

\section{End-to-end linear TD-learning: jointly updating weight and representation}
We now provide further background on a natural extension of linear TD-learning to the case where the representations are updated as well, a case we call \emph{deep} TD-learning \citep{lyle2021effect}. In end-to-end linear TD, both representation $\Phi$ and weight $w$ are updated by semi-gradient descent on the squared prediction error in \cref{eq:sg-squared-error}. The joint update on $\Phi_t$ and $w_t$ is
\begin{align}
\begin{split}\label{eq:deep-linear-td}
    w_{t+1} &=  w_t - \eta_w\partial_{w_t}L(\Phi_t,w_t), \\
    \Phi_{t+1} &= \Phi_t - \eta_\Phi \partial_{\Phi_t} L(\Phi_t,w_t) 
\end{split}
\end{align}
with learning rates $\eta_w,\eta_\Phi\geq 0$. If we interpret the combination of representation and weight parameter $\theta=(\Phi,w)$ as a whole, the end-to-end linear TD update in \cref{eq:deep-linear-td}, combined with the original linear TD update in \cref{eq:linear-td} can be understood as semi-gradient descent on the loss function $L(\Phi,w)$ with respect to $\theta$. Though end-to-end linear TD-learning brings us closer to practical implementations (e.g., DQN \citep{mnih2013}), there is no general guarantee on the behavior of the joint system $(\Phi_t,w_t)$.

We aim to understand the behavior of end-to-end linear TD by characterizing the behavior of its continous time system.
The continuous time ODE of the end-to-end linear TD updates in \cref{eq:deep-linear-td} can be formally stated as follows.
\begin{restatable}{lemma}{lemmaode}\label{lemmaode} The ODE to the end-to-end linear TD update in \cref{eq:deep-linear-td} is
\begin{align}
\begin{split} \label{eq:deep-linear-td-ode}
        \frac{dw_t}{dt} &=  \eta_w\cdot\Phi_t^T D^\pi \left(R^\pi - \left(I-\gamma P^\pi\right)\Phi_t w_t\right), \\
    \frac{d\Phi_t}{dt} &= \eta_\Phi\cdot D^\pi \left(R^\pi - \left(I-\gamma P^\pi\right)\Phi_t w_t\right) w_t^T.
\end{split}
\end{align}
\end{restatable}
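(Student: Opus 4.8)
The plan is to follow the classical ODE method for stochastic approximation used for linear TD in \citep{tsitsiklis1996analysis}. Since $x_t$ is sampled from the stationary distribution $d^\pi$ and $x_{t+1}\sim P^\pi(\cdot\mid x_t)$, the discrete-time semi-gradient recursion in \cref{eq:deep-linear-td} is a Robbins--Monro-type update whose iterates track, in the usual continuous-time scaling, the ODE obtained by replacing each stochastic update direction by its expectation over a single sampled transition. It therefore suffices to compute $\mathbb{E}\!\left[-\partial_{w}L(\Phi,w)\right]$ and $\mathbb{E}\!\left[-\partial_{\Phi}L(\Phi,w)\right]$ in closed form and to identify them with the right-hand sides of \cref{eq:deep-linear-td-ode}, up to rescaling the learning rates by the harmless constant factor $2$.

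First I would differentiate $L(\Phi,w)$ while respecting the stop-gradient operator: only the prediction term $-\phi_{x_t}^T w$ carries a gradient, so writing the per-sample TD error as $\delta\coloneqq R^\pi(x_t)+\gamma\phi_{x_{t+1}}^T w-\phi_{x_t}^T w\in\mathbb{R}^h$ and using the one-hot identity $\phi_{x_t}=\Phi^T x_t$, a short computation via the chain rule gives $-\partial_w L = 2\,\phi_{x_t}\delta^T$ and $-\partial_\Phi L = 2\,x_t\,\delta^T w^T$. Next I would take expectations over $x_t\sim d^\pi$ and $x_{t+1}\sim P^\pi(\cdot\mid x_t)$ using the one-hot-encoding identities $\mathbb{E}[x_t x_t^T]=D^\pi$, $\mathbb{E}[x_t x_{t+1}^T]=D^\pi P^\pi$ and $\mathbb{E}[x_t R^\pi(x_t)^T]=D^\pi R^\pi$. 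These yield $\mathbb{E}[x_t\delta^T]=D^\pi\bigl(R^\pi-(I-\gamma P^\pi)\Phi w\bigr)$, and left-multiplying by $\Phi^T$ gives $\mathbb{E}[\phi_{x_t}\delta^T]=\Phi^T D^\pi\bigl(R^\pi-(I-\gamma P^\pi)\Phi w\bigr)$. Substituting into the expected updates and reinstating the time index on $(\Phi_t,w_t)$ recovers \cref{eq:deep-linear-td-ode} exactly.

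The calculations are routine; the two places requiring care are (i) the stop-gradient, so that no term differentiating through $\phi_{x_{t+1}}$ or through the bootstrap target survives, and (ii) the matrix-dimension bookkeeping in the $\Phi$-gradient, since $w\in\mathbb{R}^{k\times h}$ couples all $h$ reward channels and one must keep track of the transpose $w^T$ that ends up on the right. The only genuinely nontrivial ingredient is the rigorous justification that the discrete stochastic iterates track the solution of this ODE (boundedness of the iterates and the standard step-size conditions); I would not reprove this but invoke it from the stochastic-approximation literature, exactly as the linear TD ODE in \cref{eq:linear-td-ode} is obtained from \cref{eq:linear-td} in \citep{tsitsiklis1996analysis}.
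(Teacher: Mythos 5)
Your proposal is correct and follows essentially the same route as the paper: compute the expected semi-gradient of the stop-gradient loss under $x_t\sim d^\pi$, $x_{t+1}\sim P^\pi(\cdot\mid x_t)$, and identify the ODE right-hand side with the negative expected semi-gradient scaled by the learning rates (the paper likewise absorbs the factor of $2$ and does not reprove the stochastic-approximation tracking result). The only cosmetic difference is order of operations—you differentiate the per-sample loss and then take expectations, while the paper takes the expected loss first and then differentiates—which yields identical closed-form expressions.
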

A critical difference between the linear TD and end-to-end linear TD is demonstrated through the ODE systems in \cref{eq:linear-td-ode} and \cref{eq:deep-linear-td-ode}: the linear TD induces a \emph{linear} ODE in the variable $w_t$, whereas the end-to-end linear TD is a coupled non-linear ODE in the joint variable $(\Phi_t,w_t)$. This hints at the difficulty in characterizing the learning dynamics of the end-to-end linear TD as alluded to earlier. 

\paragraph{Remarks on the learning rates.} The linear TD can be understood as a special case when $\eta_\Phi=0$. In practical implementations, it is more common to update both set of parameters with the same learning rate $\eta_w=\eta_\Phi>0$. An interesting case for theoretical analysis is when the representation is updated at a much slower pace than the weight vector, i.e., $\eta_\Phi\ll\eta_w$. We study such a limiting case in \cref{sec:spectral}. Below, we consider the general case where both learning rates assume positive finite values.

\section{Value function approximation error of end-to-end linear TD} \label{sec:value}

A primary quantity of interest for TD-learning is the value approximation error $\Phi w-V^\pi$. In linear TD-learning,
given the fixed $\Phi$, the approximation error is in general finite even if $w$ converges to the fixed point $w_\Phi^\ast$. In end-to-end linear TD-learning, where the representation matrix $\Phi_t$ is adapted over time based on the semi-gradient, intuitively we should expect the approximation error to decrease over time. Below, we make a formal characterization of such an intuition.

As a measure of the approximation accuracy, we define the following weighted value approximation error $E(\Phi,w)$,
\begin{align*}
    E\coloneqq \frac{1}{2}\text{Tr}\left(\left(\Phi w-V^\pi\right)^T D^\pi(I-\gamma P^\pi) \left(\Phi w-V^\pi\right)\right),
\end{align*}
where the function $\text{Tr}(M)=\sum_{i}M_{ii}$ returns the trace of a squared matrix $M$. The error is effectively the sum of weighted norms for the columns of $\Phi w -V^\pi\in\mathbb{R}^{|\mathcal{X}|\times h}$ under the key matrix 
\begin{align*}
    D^\pi(I-\gamma P^\pi)\in\mathbb{R}^{|\mathcal{X}|\times|\mathcal{X}|}.
\end{align*}
The key matrix is positive definite (PD), see, e.g., \citep{sutton2016emphatic} for a detailed proof. As a result, the weighted error enjoys a few useful properties.
\begin{restatable}{lemma}{lemmaweightederror}\label{lemmaweightederror} The weighted error $E(\Phi,w)$ is always non-negative and evaluates to zero if and only if $\Phi w=V^\pi$.
\end{restatable}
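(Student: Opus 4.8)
The plan is to reduce the matrix-valued trace to a sum of scalar quadratic forms, one per column of the error matrix, and then invoke the positive-definiteness of the key matrix $M \coloneqq D^\pi(I-\gamma P^\pi)$ that is quoted just above the statement.

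First I would set $U \coloneqq \Phi w - V^\pi \in \mathbb{R}^{|\mathcal{X}|\times h}$ and write $U_i \in \mathbb{R}^{|\mathcal{X}|}$ for its $i$-th column. Using linearity of the trace together with the identity $(U^T M U)_{ii} = U_i^T M U_i$, the error splits as
\begin{align*}
    E(\Phi,w) = \frac{1}{2}\sum_{i=1}^h U_i^T D^\pi(I-\gamma P^\pi) U_i.
\end{align*}
Next I would recall that, although $M$ is not symmetric, a scalar form only sees its symmetric part: $U_i^T M U_i = U_i^T \big( (M + M^T)/2 \big) U_i$. The cited fact (e.g. \citep{sutton2016emphatic}) that $M$ is positive definite means precisely that $(M+M^T)/2$ is symmetric positive definite, so each summand $U_i^T M U_i$ is $\geq 0$, and equals $0$ if and only if $U_i = 0$. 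Summing over $i$ gives $E(\Phi,w) \geq 0$, with equality if and only if every column $U_i$ vanishes, i.e. $U = 0$, i.e. $\Phi w = V^\pi$.

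The only real subtlety — and the step I would be most careful with — is the non-symmetry of $M = D^\pi(I-\gamma P^\pi)$: one must not conflate ``positive definite'' in the quadratic-form sense with symmetric positive definiteness, and the equality (``only if'') direction has to be argued through the symmetric part $(M+M^T)/2$ rather than through $M$ itself. Everything else (the trace bookkeeping and the per-column decomposition) is routine. If one prefers to avoid even mentioning the symmetric part, an equivalent route is to quote the standard Lyapunov-type estimate $x^T D^\pi(I-\gamma P^\pi) x \geq (1-\gamma)\lVert x\rVert_{D^\pi}^2 > 0$ for $x \neq 0$ (valid since $d^\pi > 0$), which delivers both the non-negativity and the equality characterization simultaneously.
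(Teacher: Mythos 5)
Your proof is correct and follows essentially the same route as the paper's: both decompose the trace into per-column quadratic forms $U_i^T D^\pi(I-\gamma P^\pi)U_i$ and invoke the positive definiteness of the key matrix to conclude non-negativity and the equality characterization. Your added care about the non-symmetry of the key matrix (arguing via its symmetric part, or via the bound $x^T D^\pi(I-\gamma P^\pi)x \geq (1-\gamma)\lVert x\rVert_{D^\pi}^2$) is a welcome clarification of what the paper leaves implicit, but not a different argument.
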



As a first important conclusion of this work, we show that when the MDP is reversible, the error function strictly decreases over time, as long as $(\Phi_t,w_t)$ has not converged to a critical point of the ODE dynamics. Intriguingly, this is because under the reversibility assumption, the end-to-end linear TD dynamics turns out to be gradient descent on the error function.

\begin{restatable}{theorem}{theoremvalue}\label{theoremvalue} Assume the Markov chain is reversible, i..e, $D^\pi P^\pi=(P^\pi)^T D^\pi$, then
the end-to-end linear TD dynamics in \cref{eq:deep-linear-td-ode} is effectively gradient descent on the error function $E(\Phi_t,w_t)$, i.e.,
\begin{align*}
    \frac{dw_t}{dt} = -\eta_w\cdot \partial_{w_t}E(\Phi_t,w_t),\ \frac{d\Phi_t}{dt} = -\eta_\Phi\cdot \partial_{\Phi_t}E(\Phi_t,w_t).
\end{align*}
As a result, the weighted value approximation error is non-increasing $
    dE(\Phi_t,w_t)/dt\leq 0$. 
If $(\Phi_t,w_t)$ is not at a critical point of the learning dynamics, i.e. when $\frac{d\Phi_t}{dt}\neq 0$ or $\frac{dw_t}{dt}\neq 0$, then $dE(\Phi_t,w_t)/dt$ is strictly negative.
\end{restatable}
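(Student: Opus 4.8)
The plan is to verify directly that the vector field on the right-hand side of \cref{eq:deep-linear-td-ode} is the negative gradient of $E$, after which both monotonicity claims fall out of the generic gradient-flow identity. Write $M \coloneqq D^\pi(I-\gamma P^\pi)$ and $\Delta \coloneqq \Phi w - V^\pi$, so that $E = \frac{1}{2}\text{Tr}\!\left(\Delta^T M \Delta\right)$. The first thing I would record is that the reversibility hypothesis $D^\pi P^\pi = (P^\pi)^T D^\pi$ is exactly the statement that $D^\pi P^\pi$ is symmetric; since $D^\pi$ is diagonal, it follows that $M = D^\pi - \gamma D^\pi P^\pi$ is symmetric. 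This is the only place reversibility enters, and it is load-bearing: without it the gradient of $E$ would involve the symmetrized matrix $\frac{1}{2}(M + M^T)$ rather than $M$ itself.

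Next I would compute the two gradients by the matrix differential $dE = \frac{1}{2}\text{Tr}\!\left(d\Delta^T (M + M^T)\Delta\right)$, substitute $d\Delta = (d\Phi)\,w + \Phi\,(dw)$, and read off the coefficients of $dw$ and $d\Phi$ to get $\partial_w E = \frac{1}{2}\Phi^T(M+M^T)\Delta$ and $\partial_\Phi E = \frac{1}{2}(M+M^T)\Delta\,w^T$, which by the symmetry of $M$ reduce to $\partial_w E = \Phi^T M \Delta$ and $\partial_\Phi E = M \Delta\, w^T$. To match these against the ODE, I would use $R^\pi = (I-\gamma P^\pi)V^\pi$ to rewrite the common factor as $R^\pi - (I-\gamma P^\pi)\Phi w = -(I-\gamma P^\pi)\Delta$; then the right-hand sides of \cref{eq:deep-linear-td-ode} become $-\eta_w\,\Phi^T D^\pi(I-\gamma P^\pi)\Delta = -\eta_w\,\partial_w E$ and $-\eta_\Phi\, D^\pi(I-\gamma P^\pi)\Delta\, w^T = -\eta_\Phi\,\partial_\Phi E$, which is precisely the asserted gradient descent.

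The monotonicity then follows from the chain rule: with respect to the Frobenius inner product, $\frac{dE}{dt} = \langle \partial_w E, \frac{dw_t}{dt}\rangle + \langle \partial_\Phi E, \frac{d\Phi_t}{dt}\rangle = -\eta_w \lVert \partial_w E\rVert^2 - \eta_\Phi \lVert \partial_\Phi E\rVert^2 \le 0$. For strictness, each of the two nonpositive summands vanishes exactly when the corresponding time-derivative in \cref{eq:deep-linear-td-ode} does, so their sum is zero if and only if $(\Phi_t,w_t)$ is at a critical point of the dynamics; the contrapositive gives the strict decrease away from critical points. The only step that is not pure bookkeeping is the gradient computation, and its one subtlety is the role of symmetry — one must track that it is symmetry of $M = D^\pi(I-\gamma P^\pi)$, which holds under reversibility but fails in general, that converts the naturally symmetrized gradient into the un-symmetrized matrix appearing in the TD vector field; the remainder is the textbook "$\dot\theta = -\nabla E(\theta)$ implies $\dot E = -\lVert \nabla E\rVert^2$" argument.
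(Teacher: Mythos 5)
Your proposal is correct and follows essentially the same route as the paper's own proof: reversibility makes the key matrix $D^\pi(I-\gamma P^\pi)$ symmetric, the gradients of $E$ then coincide (up to the learning rates) with the TD vector field after rewriting the residual via $R^\pi=(I-\gamma P^\pi)V^\pi$, and the chain rule gives $\frac{dE}{dt}=-\eta_w\lVert\partial_w E\rVert^2-\eta_\Phi\lVert\partial_\Phi E\rVert^2\le 0$ with strictness away from critical points. The only cosmetic difference is that you compute the gradients with matrix differentials on the full error matrix, whereas the paper works column by column, which changes nothing substantive.
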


\begin{proof}
We provide a proof sketch here. The reversibility assumption establishes that the key matrix $D^\pi(I-\gamma P^\pi)$ is symmetric. Under this symmetric condition, with a few lines of algebraic manipulations, we can show that indeed the updates to $w_t$ and $\Phi_t$ follow negative gradient on the weighted error $E(\Phi_t,w_t)$. As a result
\begin{align*}
    \frac{d}{dt}E(\Phi_t,w_t) = -\left(\frac{1}{\eta_\Phi}\left\lVert \frac{d\Phi_t}{dt}\right\rVert_2^2 + \frac{1}{\eta_w} \left\lVert \frac{dw_t}{dt}\right\rVert_2^2\right),
\end{align*}
where $\left\lVert\cdot\right\rVert_2$ is the $L_2 $ norm. This concludes the proof.
\end{proof}

\begin{figure}[t]
    \centering
    \includegraphics[keepaspectratio,width=.45\textwidth]{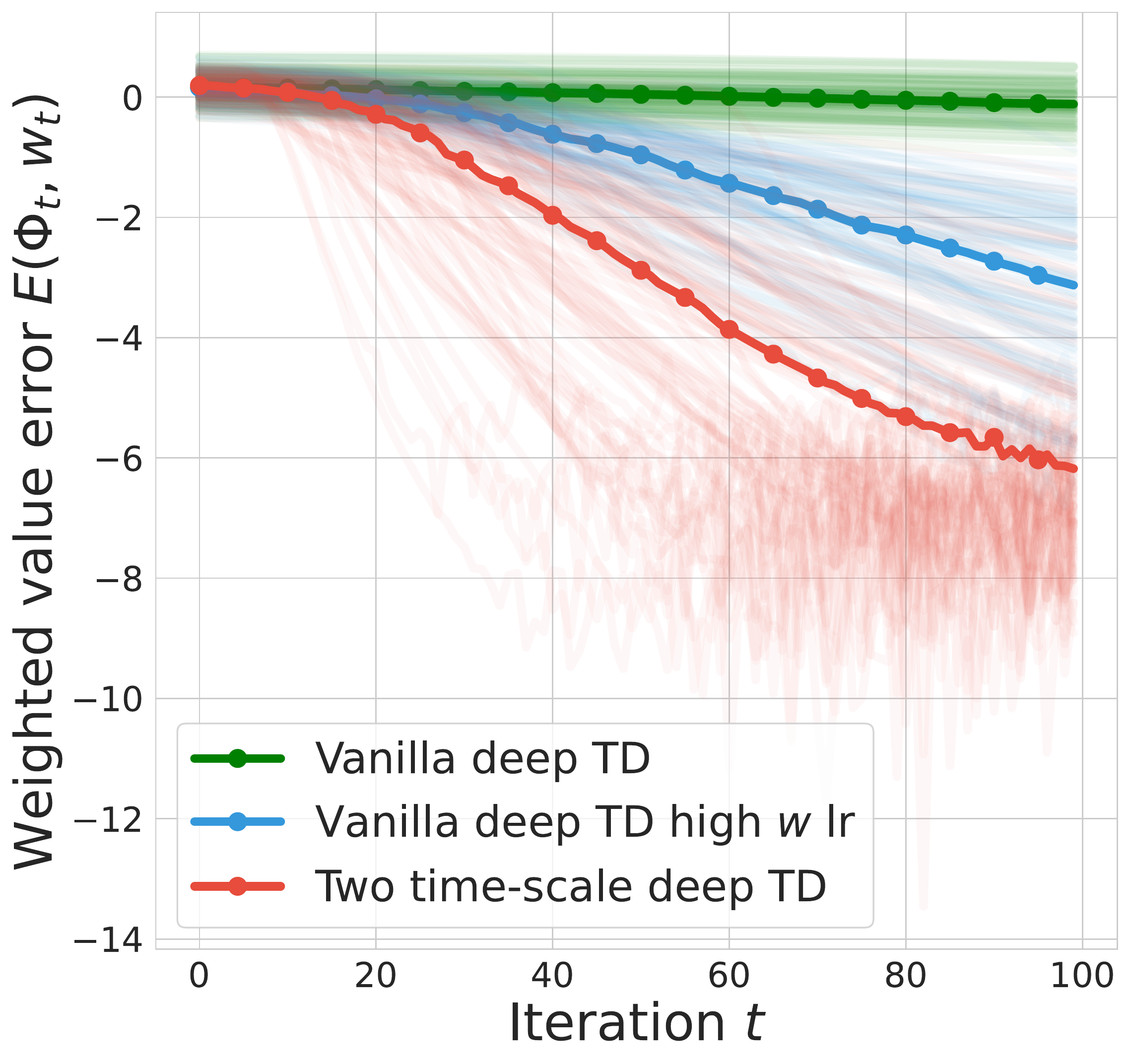}
     \caption{The evolution of the weighted value approximation error $E(\Phi_t,w_t)$ under the end-to-end linear TD dynamics. The solid curves show the median result over $100$ randomly generated MDPs. We compare three cases: (1) vanilla end-to-end linear TD with $\eta_\Phi=\eta_w=1$ (\cref{eq:deep-linear-td-ode}); (2) vanilla end-to-end linear TD with $\eta_\Phi=1,\eta_w=10$; (3) two time-scale end-to-end linear TD (\cref{eq:deep-linear-td-ode-two-time-scale}). Across all dynamics, the weighted error takes a decaying trend though such decay is not necessarily monotonic for randomly sampled MDPs. Fixing the number of iterations, the two time-scale dynamics achieves the fastest rate of decay compared to vanilla end-to-end linear TD.}
    \label{fig:weighted-errors}
\end{figure}

We expand on the result a bit more. The second term in the time derivative $\left\lVert \frac{dw_t}{dt} \right\rVert_2^2$ can be attributed to the classic linear TD update, whereas the first term $\left\lVert \frac{d\Phi_t}{dt} \right\rVert_2^2$ is derived from the updates on the representation matrix. Such a \emph{local improvement} property is indicative of the usefulness of end-to-end linear TD in practice, similar to how policy gradient updates provide local value improvements in policy optimization. The rate of the improvement in $E(\Phi_t,w_t)$ is also proportional to the magnitude of the semi-gradient updates to $\Phi_t$ and $w_t$. 

\paragraph{Critical points of end-to-end linear TD.} \cref{theoremvalue} implies the end-to-end linear TD dynamics should converge to a critical point, i.e., points at which $\dot{\Phi}_t=0$ and $\frac{dw_t}{dt}=0$, which we discuss below. 
Solving for $\frac{dw_t}{dt}=0$, we see that given any $\Phi$, the weight parameter must be at the TD fixed point $w=w_{\Phi}^\ast$. Then when solving for $\dot{\Phi}=0$, we obtain the following characterization for the critical points.
\begin{restatable}{lemma}{lemmacritical}\label{lemmacritical} Assume the matrix $\Phi_t^T D^\pi(I-\gamma P^\pi) \Phi_t$ is invertible for all time, then the set of critical points $\mathcal{C}$ of the end-to-end linear TD dynamics (\cref{eq:deep-linear-td-ode}) are pairs of $(\Phi,w)\in\mathcal{C}$ such that $w=w_\Phi^\ast$, and $\Phi$ satisfies the following condition,
\begin{align}
    D^\pi R^\pi (D^\pi R^\pi)^T \Phi \in \text{span}\left(D^\pi(I-\gamma P^\pi) \Phi\right).\label{eq:condition-critical}
\end{align}
Here, $\text{span}(A)$ denotes the vector subspace spanned by the columns of $A$.  The notation $A\in\text{span}(B)$ means that each column of matrix $A$ is in the subspace $\text{span}(B)$.
\end{restatable}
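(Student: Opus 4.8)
The plan is to solve, in sequence, the two stationarity conditions $\frac{dw_t}{dt}=0$ and $\frac{d\Phi_t}{dt}=0$ that define a critical point, isolating $w$ first and then $\Phi$. Throughout I abbreviate $A\coloneqq D^\pi(I-\gamma P^\pi)$ and $b\coloneqq D^\pi R^\pi$, so that the end-to-end ODE \cref{eq:deep-linear-td-ode} reads $\frac{dw_t}{dt}=\eta_w\Phi^T(b-A\Phi w)$ and $\frac{d\Phi_t}{dt}=\eta_\Phi (b-A\Phi w)w^T$. Since $\eta_w>0$ and $\Phi^T A\Phi=\Phi_t^T D^\pi(I-\gamma P^\pi)\Phi_t$ is invertible by hypothesis, $\frac{dw_t}{dt}=0$ is equivalent to $\Phi^T b=\Phi^T A\Phi w$, i.e. $w=(\Phi^T A\Phi)^{-1}\Phi^T b=w_\Phi^\ast$ from \cref{eq:linear-td-solution}. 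Note also that the invertibility hypothesis forces $\Phi$ to have full column rank $k$, a fact used below.

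Next I substitute $w=w_\Phi^\ast$ into $\frac{d\Phi_t}{dt}=0$. Because $\eta_\Phi>0$, the condition becomes $(b-A\Phi w_\Phi^\ast)(w_\Phi^\ast)^T=0$. Writing $\Pi\coloneqq A\Phi(\Phi^T A\Phi)^{-1}\Phi^T$, one has $b-A\Phi w_\Phi^\ast=(I-\Pi)b$; a direct check gives $\Pi^2=\Pi$, and since $\Phi^T$ is surjective onto $\mathbb{R}^k$, $\text{range}(\Pi)=\text{span}(A\Phi)$. The factor $(w_\Phi^\ast)^T$ equals $b^T\Phi(\Phi^T A^T\Phi)^{-1}$; $A$ is not symmetric in general (no reversibility is assumed here), but $\Phi^T A^T\Phi=(\Phi^T A\Phi)^T$ is still invertible, so right-multiplying the matrix equation by it shows $(I-\Pi)b\,(w_\Phi^\ast)^T=0$ is equivalent to $(I-\Pi)b\,b^T\Phi=0$, that is, $b b^T\Phi=\Pi b b^T\Phi$.

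Finally I convert $b b^T\Phi=\Pi b b^T\Phi$ into \cref{eq:condition-critical}. The forward direction is immediate: $b b^T\Phi=\Pi b b^T\Phi\in\text{range}(\Pi)=\text{span}(A\Phi)$. For the converse, if $b b^T\Phi=A\Phi C$ for some $C\in\mathbb{R}^{k\times k}$, then $\Pi b b^T\Phi=A\Phi(\Phi^T A\Phi)^{-1}(\Phi^T A\Phi)C=A\Phi C=b b^T\Phi$. Recalling $b=D^\pi R^\pi$ and $A\Phi=D^\pi(I-\gamma P^\pi)\Phi$, the relation $b b^T\Phi\in\text{span}(A\Phi)$ is exactly \cref{eq:condition-critical}. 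Combining the two steps, $(\Phi,w)\in\mathcal{C}$ if and only if $w=w_\Phi^\ast$ and $\Phi$ satisfies \cref{eq:condition-critical}.

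The argument is largely bookkeeping; the step I expect to need the most care is discarding the factor $(\Phi^T A^T\Phi)^{-1}$ from $(w_\Phi^\ast)^T$ while keeping $b^T\Phi$, which is legitimate only because multiplying a matrix identity on the right by an invertible matrix preserves it, and because the invertibility of $\Phi^T A\Phi$ — hence of its transpose and of the oblique idempotent $\Pi$ — is assumed. A secondary point is justifying $\text{range}(\Pi)=\text{span}(A\Phi)$ rather than merely $\subseteq$, which relies on the full column rank of $\Phi$ noted in the first step.
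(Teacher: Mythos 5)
Your proposal is correct, and it reaches the paper's conclusion by the same overall skeleton (first solve $\frac{dw_t}{dt}=0$ to get $w=w_\Phi^\ast$ via invertibility of $\Phi^TA\Phi$, then reduce $\frac{d\Phi_t}{dt}=0$ to the span condition), but the key reduction step is handled differently. The paper keeps $V^\pi$ in play, writes the stationarity equation as $M\,AV^\pi(V^\pi)^TA^T\Phi = AV^\pi(V^\pi)^TA^T\Phi$ with $M=A\Phi(\Phi^TA\Phi)^{-1}\Phi^T$, and invokes its oblique-projection lemma (\cref{lemmaprojection}) with the weighted norm induced by $(A^T)^{-1}$ -- which requires a separate argument that $(A^T)^{-1}$ is PD -- followed by a chain of equivalences to translate $AV^\pi(V^\pi)^TA^T\Phi\in\text{span}(A\Phi)$ into \cref{eq:condition-critical}. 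You instead note at the outset that $D^\pi R^\pi = AV^\pi \eqqcolon b$, so the relevant matrix is $bb^T\Phi$ from the start and no translation chain is needed, and you replace the norm-minimization characterization by pure algebra: $\Pi^2=\Pi$, $\text{range}(\Pi)=\text{span}(A\Phi)$ (via the full column rank of $\Phi$ forced by invertibility of $\Phi^TA\Phi$), plus a one-line direct verification of the converse ($bb^T\Phi=A\Phi C \Rightarrow \Pi bb^T\Phi=bb^T\Phi$); the step of right-multiplying by the invertible $\Phi^TA^T\Phi$ to drop $(w_\Phi^\ast)^T$ in favor of $b^T\Phi$ is correctly justified as an equivalence. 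Your route is more elementary and sidesteps the slightly delicate use of a projection under an asymmetric PD weight matrix; the paper's route buys the geometric ``projection onto $\text{span}(A\Phi)$'' picture that it reuses conceptually elsewhere, but logically your argument is complete and, if anything, shorter.
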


We note that in general, it is challenging to deliver a more concise and intuitive description of the critical points beyond \cref{eq:condition-critical}, however, with further assumptions we can provide more meaningful characterizations on the representations at critical points. See \cref{sec:spectral} for further discussions. 

\paragraph{Reversibility assumption.} The reversibility assumption on the Markov chain is a  sufficient condition in showing that the weighed error $E(\Phi_t,w_t)$ strictly decays over time (\cref{theoremvalue}), similar to the linear TD case \citep{ollivier2018approximate}. Lately, \citet{brandfonbrener2019geometric} also showed the convergence of non-linear TD dynamics when the MDP is reversible enough. All such prior results allude to the important impact of such an assumption. Nevertheless, it is a very strong assumption in general. We complement the theoretical results here with numerical simulations of the weighted error $E(\Phi_t,w_t)$ along the end-to-end linear TD dynamics, over general MDPs that can violate the reversibility assumption.

In \cref{fig:weighted-errors}, we show the evolution of $E(\Phi_t,w_t)$ over time under three different dynamics: (1) vanilla end-to-end linear TD with $\eta_\Phi=\eta_w=1$ (\cref{eq:deep-linear-td-ode}); (2) vanilla end-to-end linear TD with $\eta_\Phi=1,\eta_w=10$; (3) two time-scale end-to-end linear TD (\cref{eq:deep-linear-td-ode-two-time-scale}, see \cref{sec:spectral} for details). For each dynamics, we generate $100$ random MDPs that very likely violate the reversibility assumption. In all cases, the error takes a decaying trend though the decrease is not necessarily monotonic in general. This empirically shows that to certain degree, the general trend of \emph{decreasing value error} is robust to the violation of the assumption. Furthermore, we see that when the weight parameter is updated at a higher learning rate than the representation matrix, the error decays at a faster rate. The fastest rate seems to be obtained at the extreme when at each iteration, the weight parameter is set at the TD fixed point $w_t=w_{\Phi_t}^\ast$ under the limiting two time-scale dynamics (see \cref{sec:spectral}). See Appendix~\ref{appendix:exp} for further experimental details.

\paragraph{Convergence of the error and representation.}
Since the error function $E(\Phi_t,w_t)$ is strictly monotonic over time, it converges to an asymptotic value as $t\rightarrow\infty$. An important question is whether the error converges to the lowest possible value over the set of critical points $\lim_{t\rightarrow\infty}E(\Phi_t,w_t)=\inf_{(\Phi,w)\in\mathcal{C}}E(\Phi,w)$.  Answering such questions requires a refined analysis of the dynamics, which we leave to future work. A related question is what does the representation $\Phi_t$ converge to, if it converges at all. We provide some discussions on this question in the next section.

\section{Characterizing representations of end-to-end linear TD}
\label{sec:spectral}

Thus far we have characterized the behavior of the end-to-end linear TD dynamics (in \cref{eq:deep-linear-td-ode}) via the value approximation error $E(\Phi,w)$. Such an error characterizes the information that the aggregate prediction $\Phi w$ contains. In applications such as policy evaluation, where value prediction is the ultimate objective, such a characterization seems to suffice. 

In practice, it is also of interest to understand the information that the representation $\Phi$ contains by itself. Consider a motivating example (taken from \citep{bellemare2019geometric}) where $k=h=1$, if we take the representation to be the value function $\Phi=V^\pi$ and $w=1$, the prediction error is zero. However, such a representation is specialized to reward function $R^\pi$ and does not capture general information about the transition matrix $P^\pi$. 

\subsection{Warming up: two time-scale end-to-end linear TD}
To facilitate the discussion, we consider a specialized case of the end-to-end linear TD dynamics, where the weight parameter $w_t$ is updated at a much faster time pace than $\Phi_t$. Numerically, this can be emulated by the end-to-end linear TD updates in \cref{eq:deep-linear-td} with $\eta_\Phi\ll\eta_w$ as mentioned earlier. Here, we consider the extreme case, the \emph{two time-scale} dynamics where at at any iteration $t$, the weight parameter $w_t=w_{\Phi_t}^\ast$ is the optimal weight adapted to representation $\Phi_t$. 
\begin{align}
\begin{split}\label{eq:deep-linear-td-two-time-scale}
    w_t = w_{\Phi_t}^\ast,\  
    \Phi_{t+1} = \Phi_t - \eta_\Phi \partial_{\Phi_t} L(\Phi_t,w_t) 
\end{split}
\end{align}
We call the above dynamics the end-to-end linear TD-learning with two time-scales. The continuous time ODE system is as follows.
\begin{restatable}{lemma}{lemmadeeplineartwotimescaleode}\label{lemmadeeplineartwotimescaleode} The ODE to the two time-scale dynamics in \cref{eq:deep-linear-td-ode-two-time-scale} is 
\begin{align}
    \frac{d\Phi_t}{dt} = \eta_\Phi\cdot D^\pi\left(R^\pi-(I-\gamma P^\pi)\Phi_tw_{\Phi_t}^\ast\right)(w_{\Phi_t}^\ast)^T,\label{eq:deep-linear-td-ode-two-time-scale}
\end{align}
where recall that $w_{\Phi_t}^\ast$ is the linear TD fixed point to representation $\Phi_t$ specified in \cref{eq:linear-td-solution}. 
\end{restatable}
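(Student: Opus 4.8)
The plan is to obtain \cref{eq:deep-linear-td-ode-two-time-scale} by specializing the $\Phi$-component of the general end-to-end linear TD ODE (\cref{lemmaode}, \cref{eq:deep-linear-td-ode}) to the case where the weight is clamped to the fixed point $w_t=w_{\Phi_t}^\ast$. The key point to be careful about is that in the two time-scale update \cref{eq:deep-linear-td-two-time-scale}, the representation is moved along the \emph{semi-gradient} $\partial_{\Phi_t}L(\Phi_t,w_t)$, which is the partial derivative of $L$ in its first argument with $w_t$ held fixed; the dependence of $w_{\Phi_t}^\ast$ on $\Phi_t$ is \emph{not} differentiated through. Consequently the functional form of the expected drift on $\Phi_t$ is exactly the one appearing in the $\Phi$-equation of \cref{eq:deep-linear-td-ode}, evaluated at the particular value $w_t=w_{\Phi_t}^\ast$.

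First I would recall the per-sample computation behind \cref{lemmaode}: for $x_t\sim d^\pi$ and $x_{t+1}\sim P^\pi(\cdot\mid x_t)$, differentiating $L(\Phi_t,w_t)=\lVert \text{sg}(R^\pi(x_t)+\gamma\phi_{x_{t+1}}^Tw_t)-\phi_{x_t}^Tw_t\rVert_2^2$ in $\Phi_t$ (with the back-up target frozen) gives a rank-one term proportional to $x_t\,\delta_t^T w_t^T$, where $\delta_t = R^\pi(x_t)+\gamma\phi_{x_{t+1}}^Tw_t-\phi_{x_t}^Tw_t$ is the TD error. Taking expectation over the sampled transition turns $\mathbb{E}[x_t\delta_t^T]$ into the row-weighted residual $D^\pi\big(R^\pi-(I-\gamma P^\pi)\Phi_t w_t\big)$, so the expected semi-gradient update direction for $\Phi_t$ is $D^\pi\big(R^\pi-(I-\gamma P^\pi)\Phi_t w_t\big)w_t^T$ (up to the constant absorbed into $\eta_\Phi$, exactly as in \cref{eq:deep-linear-td-ode}). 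This step can simply be cited from the proof of \cref{lemmaode} rather than redone.

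Next I would substitute $w_t=w_{\Phi_t}^\ast$, with $w_\Phi^\ast$ given by \cref{eq:linear-td-solution}, into the drift above; this immediately yields $\eta_\Phi\cdot D^\pi\big(R^\pi-(I-\gamma P^\pi)\Phi_t w_{\Phi_t}^\ast\big)(w_{\Phi_t}^\ast)^T$ as the expected increment of $\Phi_t$. Finally I would invoke the standard ODE method for stochastic approximation (as used for linear TD in \citep{tsitsiklis1996analysis}): writing $\Phi_{t+1}=\Phi_t+\eta_\Phi\, g(\Phi_t)+\eta_\Phi\,(\text{martingale-difference noise})$ with $g(\Phi)\coloneqq D^\pi\big(R^\pi-(I-\gamma P^\pi)\Phi w_\Phi^\ast\big)(w_\Phi^\ast)^T$, the interpolated trajectory converges as $\eta_\Phi\to 0$ to the solution of $\dot\Phi_t=\eta_\Phi\, g(\Phi_t)$, which is \cref{eq:deep-linear-td-ode-two-time-scale}.

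The only genuine subtlety — and the step I expect to require the most care — is the regularity of the vector field $g$: it is well-defined and smooth precisely where $\Phi^T D^\pi(I-\gamma P^\pi)\Phi$ is invertible, so that $w_\Phi^\ast$ from \cref{eq:linear-td-solution} exists and $\Phi\mapsto w_\Phi^\ast$ is a rational (hence locally Lipschitz) map. I would therefore state the lemma under the standing assumption that this invertibility holds along the trajectory of interest (mirroring the hypothesis of \cref{lemmacritical}); on that set the ODE correspondence goes through verbatim, and everything else is a mechanical substitution.
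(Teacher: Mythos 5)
Your proposal is correct and follows essentially the same route as the paper: the paper derives the expected semi-gradient drift for $\Phi_t$ (as in \cref{lemmaode}) and then simply substitutes $w_t=w_{\Phi_t}^\ast$ from \cref{eq:linear-td-solution}, exactly your key step of not differentiating through the dependence of $w_{\Phi_t}^\ast$ on $\Phi_t$. Your added remarks on the stochastic-approximation ODE correspondence and on invertibility of $\Phi^T D^\pi(I-\gamma P^\pi)\Phi$ are sensible extra care that the paper leaves implicit.
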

Importantly, note that the two time-scale ODE dynamics cannot be recovered exactly from the ODE to the vanilla end-to-end linear TD  \cref{eq:deep-linear-td-ode} with finite values of $\eta_w$ and $\eta_\Phi$.
Nevertheless, as a corollary to \cref{theoremvalue}, we can still show that under the reversibility assumption, the two time-scale dynamics also decreases the weighted value approximation error.
\begin{restatable}{corollary}{corovalue}\label{coroalue} Assume the Markov chain is reversible, then under the two time-scale learning dynamics in \cref{eq:deep-linear-td-ode-two-time-scale}, the weighted value approximation error is non-increasing $
    \frac{d}{dt}E(\Phi_t,w_t)\leq 0$.
If $\Phi_t$ is not at a critical point of the learning dynamics, then $ \frac{d}{dt}E(\Phi_t,w_t)$ is strictly negative. 
\end{restatable}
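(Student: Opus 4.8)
The plan is to reduce \cref{coroalue} to \cref{theoremvalue} by an envelope argument that exploits the fact that the two time-scale flow pins $w_t=w_{\Phi_t}^\ast$ at all times. The first step is to show that, \emph{under reversibility}, the TD fixed point $w_\Phi^\ast$ is precisely the minimizer of the error in the weight slot, i.e. $E(\Phi,w_\Phi^\ast)=\min_w E(\Phi,w)$. Indeed, differentiating $E$ in $w$ and using that reversibility makes the key matrix $D^\pi(I-\gamma P^\pi)$ symmetric gives the stationarity equation $\Phi^T D^\pi(I-\gamma P^\pi)\Phi\, w=\Phi^T D^\pi(I-\gamma P^\pi)V^\pi=\Phi^T D^\pi R^\pi$ (the last identity because $(I-\gamma P^\pi)V^\pi=R^\pi$), whose unique solution is $w_\Phi^\ast$ by \cref{eq:linear-td-solution}; and $E$ is convex in $w$ since the key matrix is PD, so this stationary point is a global minimum. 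In particular $\partial_w E(\Phi,w_\Phi^\ast)=0$.

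Next I would differentiate the error along the two time-scale trajectory. By the chain rule, $\frac{d}{dt}E(\Phi_t,w_{\Phi_t}^\ast)=\langle\partial_\Phi E(\Phi_t,w_{\Phi_t}^\ast),\dot\Phi_t\rangle+\langle\partial_w E(\Phi_t,w_{\Phi_t}^\ast),\tfrac{d}{dt}w_{\Phi_t}^\ast\rangle$, where the inner products are Frobenius inner products on matrices; the map $t\mapsto w_{\Phi_t}^\ast$ is $C^1$ because $w_\Phi^\ast$ is a rational function of $\Phi$ wherever $\Phi^T D^\pi(I-\gamma P^\pi)\Phi$ is invertible, which we assume along the flow as in \cref{lemmacritical}. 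The second term vanishes by the first step. For the first term, I would invoke the pointwise algebraic identity behind \cref{theoremvalue}: under reversibility, the map $(\Phi,w)\mapsto\eta_\Phi\, D^\pi\big(R^\pi-(I-\gamma P^\pi)\Phi w\big)w^T$ coincides with $-\eta_\Phi\,\partial_\Phi E(\Phi,w)$ for \emph{every} pair $(\Phi,w)$. Taking $w=w_{\Phi_t}^\ast$ and comparing with \cref{eq:deep-linear-td-ode-two-time-scale} yields $\dot\Phi_t=-\eta_\Phi\,\partial_{\Phi_t}E(\Phi_t,w_{\Phi_t}^\ast)$.

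Combining the two pieces gives $\frac{d}{dt}E(\Phi_t,w_t)=-\eta_\Phi\lVert\partial_{\Phi_t}E(\Phi_t,w_t)\rVert_2^2=-\tfrac{1}{\eta_\Phi}\lVert\dot\Phi_t\rVert_2^2\le 0$, which is strictly negative exactly when $\dot\Phi_t\neq 0$, i.e. away from a critical point of \cref{eq:deep-linear-td-ode-two-time-scale}; this is the claim. Equivalently, one can set $\bar E(\Phi):=\min_w E(\Phi,w)=E(\Phi,w_\Phi^\ast)$, note $E(\Phi_t,w_t)=\bar E(\Phi_t)$ on the two time-scale flow, and apply Danskin's theorem (the envelope theorem) to get $\nabla\bar E(\Phi)=\partial_\Phi E(\Phi,w_\Phi^\ast)$, which sidesteps any bookkeeping about $\tfrac{d}{dt}w_{\Phi_t}^\ast$.

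The main obstacle is really the first step — establishing that $w_\Phi^\ast$ is the $w$-minimizer of $E$ — together with the regularity of $\Phi\mapsto w_\Phi^\ast$ along the flow; everything downstream is routine. It is worth stressing that the first step is exactly where reversibility enters: without it, $w_\Phi^\ast$ is the oblique TD projection rather than the $E$-minimizer, the cross term need not vanish, and the reduction to \cref{theoremvalue} breaks down — consistent with the corollary's hypothesis.
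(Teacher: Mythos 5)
Your proposal is correct and follows essentially the same route as the paper's proof: both establish $\partial_w E(\Phi_t,w_{\Phi_t}^\ast)=0$ by combining the TD fixed-point equation with the reversibility-based formula for $\partial_w E$ from \cref{theoremvalue}, then apply the chain rule and the identity $\dot\Phi_t=-\eta_\Phi\,\partial_{\Phi_t}E(\Phi_t,w_{\Phi_t}^\ast)$ to obtain $\frac{d}{dt}E=-\frac{1}{\eta_\Phi}\bigl\lVert\dot\Phi_t\bigr\rVert_2^2$. Your explicit handling of the regularity of $\Phi\mapsto w_\Phi^\ast$ and the envelope (Danskin) remark merely make rigorous a step the paper treats implicitly, so the substance is the same.
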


\subsection{Non-collapse representation dynamics}

One primary motivation to consider the two time-scale dynamics is that the representation matrix $\Phi_t$ enjoys the following important property: throughout the dynamics, the covariance matrix $\Phi_t^T\Phi_t$ is a constant matrix over time.
\begin{restatable}{lemma}{lemmaconstant}\label{lemmaconstant} Under the two time-scale dynamics in \cref{eq:deep-linear-td-ode-two-time-scale}, the covariance matrix $\Phi_t^T\Phi_t\in\mathbb{R}^{k\times k}$ is a constant matrix over time. 
\end{restatable}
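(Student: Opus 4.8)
The plan is to differentiate the Gram matrix $\Phi_t^T\Phi_t$ directly along the trajectory and show its time derivative vanishes, the key ingredient being the defining normal equation of the linear TD fixed point $w_{\Phi_t}^\ast$. Introduce the (weighted) TD residual $\delta_t\coloneqq D^\pi\left(R^\pi-(I-\gamma P^\pi)\Phi_tw_{\Phi_t}^\ast\right)\in\mathbb{R}^{|\mathcal{X}|\times h}$, so that the ODE in \cref{eq:deep-linear-td-ode-two-time-scale} reads $\frac{d\Phi_t}{dt}=\eta_\Phi\,\delta_t(w_{\Phi_t}^\ast)^T$. By the product rule,
\begin{align*}
    \frac{d}{dt}\left(\Phi_t^T\Phi_t\right) = \left(\frac{d\Phi_t}{dt}\right)^T\Phi_t + \Phi_t^T\frac{d\Phi_t}{dt} = \eta_\Phi\left(w_{\Phi_t}^\ast\left(\delta_t^T\Phi_t\right) + \left(\Phi_t^T\delta_t\right)(w_{\Phi_t}^\ast)^T\right),
\end{align*}
so it suffices to prove $\Phi_t^T\delta_t=0$ (the other term is its transpose).

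The second step is a one-line computation using \cref{eq:linear-td-solution}. Expanding, $\Phi_t^T\delta_t=\Phi_t^TD^\pi R^\pi-\Phi_t^TD^\pi(I-\gamma P^\pi)\Phi_t\,w_{\Phi_t}^\ast$; multiplying the identity $w_{\Phi_t}^\ast=\left(\Phi_t^TD^\pi(I-\gamma P^\pi)\Phi_t\right)^{-1}\Phi_t^TD^\pi R^\pi$ on the left by $\Phi_t^TD^\pi(I-\gamma P^\pi)\Phi_t$ gives $\Phi_t^TD^\pi(I-\gamma P^\pi)\Phi_t\,w_{\Phi_t}^\ast=\Phi_t^TD^\pi R^\pi$, so the two terms cancel and $\Phi_t^T\delta_t=0$. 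Taking the transpose yields $\delta_t^T\Phi_t=0$ as well, and hence $\frac{d}{dt}(\Phi_t^T\Phi_t)=0$; since the ODE solution is continuous, $\Phi_t^T\Phi_t$ is constant over time.

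I do not expect a genuine obstacle here: the only point requiring care is that $w_{\Phi_t}^\ast$ must be well defined along the entire trajectory, i.e.\ $\Phi_t^TD^\pi(I-\gamma P^\pi)\Phi_t$ should remain invertible, which is exactly the standing assumption already invoked in \cref{lemmacritical} and can be stated as a hypothesis. It is worth remarking that the argument uses none of the reversibility assumptions of \cref{sec:value}; geometrically, the identity $\Phi_t^T\delta_t=0$ says that the representation's semi-gradient direction $\delta_t$ is Euclidean-orthogonal to the current column space of $\Phi_t$ — precisely the normal-equation characterization of the TD fixed point — so the rank-$h$ update $\delta_t(w_{\Phi_t}^\ast)^T$ never alters the inner products among the columns of $\Phi_t$.
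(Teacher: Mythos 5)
Your proposal is correct and follows essentially the same route as the paper's proof: both reduce the claim to the orthogonality $\Phi_t^T\,D^\pi\bigl(R^\pi-(I-\gamma P^\pi)\Phi_t w_{\Phi_t}^\ast\bigr)=0$, which the paper obtains from the first-order optimality (fixed-point) condition on $w_t$ and you obtain equivalently from the normal equation implicit in \cref{eq:linear-td-solution}, and then both conclude via the product rule on $\Phi_t^T\Phi_t$. Your explicit remark that invertibility of $\Phi_t^TD^\pi(I-\gamma P^\pi)\Phi_t$ is needed along the trajectory is a fair (and slightly more careful) statement of a hypothesis the paper leaves implicit.
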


\cref{lemmaconstant} implies that the representation matrix $\Phi_t$ maintains its representational capacity over time. Denoting the $i$-th column of $\Phi_t$ as $\Phi_{t,i}$ for $1\leq i\leq k$, we can visualize the evolution of $\Phi_t$ as \emph{rotations} in the representation space: the relative angle between $\Phi_{t,i}$ and $\Phi_{t,j}$, as well as their respective lengths, are preserved over time. For example, if the $k$ columns of $\Phi_t$ are initialized to be orthornormal $\Phi_0^T\Phi_0=I_{k\times k}$, then they stay orthonormal throughout the dynamics. This excludes situations where all $k$ columns converge to the same direction (see, e.g., dynamics in \citep{lyle2021effect} for examples), in which case the learned representation is not ideal.

\subsection{When does end-to-end linear TD learn useful representations}

In classic linear TD, the representations are fixed throughout the weight update. Examples of good \emph{fixed} representations include top eigen or singular vector decompositions of the transition matrix $P^\pi$
\citep{mahadevan2005proto,c.2018eigenoption,behzadian2019fast,ren2022spectral}, some of which can lead to provably low value approximation errors \citep{behzadian2019fast}.

Now, we seek to identify cases where end-to-end linear TD entails learning such useful representations. 
To measure the information contained in $\Phi_t$ about the transition matrix $P^\pi$, we define the trace objective
\begin{align}
    f(\Phi_t) \coloneqq  \text{Tr}\left(\Phi_t^T  (I-\gamma P^\pi)^{-1} \Phi_t\right)\label{eq:symmetric-trace-obj}.
\end{align}
The resolvant matrix $(I-\gamma P^\pi)^{-1}=\sum_{t=0}^\infty (\gamma P^\pi)^t$ reflects the long term discounted transition under $P^\pi$. To see why the above trace objective provides a helpful measure, note that when $P^\pi$ is symmetric, we can relate it to the data covariance matrix in PCA \citep{abdi2010principal}. Indeed, if we constrain inputs to $f$ to be the matrix $U$ concatenating any of $k$ distinct eigenvectors $(u_i)_{i=1}^k$ of $(I-\gamma P^\pi)^{-1}$, then $f(U)=\sum_{i=1}^k \lambda_i$ where $\lambda_i$ is the eigenvalue of $u_i$. As a result, solving the constrained optimization problem
\begin{align}
    \max_\Phi f(\Phi),\ \text{s.t.}\  \Phi^T\Phi=I_{k\times k}\label{eq:k-pca}
\end{align}
amounts to finding the top $k$ eigenvectors of $(I-\gamma P^\pi)^{-1}$.

Can we show that the end-to-end linear TD dynamics makes progress towards maximizing $f(\Phi)$?
Unfortunately, even under the restrictive setting that $P^\pi$ is symmetric, there is no guarantee that $f(\Phi_t)$ is maximized following the end-to-end linear TD dynamics. An probably intuitive explanation is that since $R^\pi$ comes into play in the TD updates, it may prevent $\Phi_t$ from learning just about the transition matrix (see numerical examples in \cref{subsec:random}). However, by imposing further conditions on the reward function, we show below that $\Phi_t$ learns to capture the spectral information of $P^\pi$.

\begin{restatable}{theorem}{theoremspectral}\label{theoremspectral} Assume the outer product of the reward function is an identity matrix $R^\pi(R^\pi)^T=I_{|\mathcal{X}|\times |\mathcal{X}|}$ and assume $P^\pi$ is symmetric. Then under the two time-scale dynamics in \cref{eq:deep-linear-td-ode-two-time-scale}, the trace objective is non-increasing $df/dt\geq 0$ over time. If $\frac{d\Phi_t}{dt}\neq 0$, then $df/dt>0$.
\end{restatable}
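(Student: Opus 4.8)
We plan to show that $df/dt$ equals a manifestly non-negative trace expression.

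\textbf{Step 1 (reductions).} Since $P^\pi$ is symmetric and row-stochastic it is doubly stochastic, so its stationary distribution is uniform and $D^\pi = \tfrac{1}{|\mathcal{X}|}I$; absorbing the scalar $\tfrac{1}{|\mathcal{X}|}$ into $\eta_\Phi$ we may assume $D^\pi = I$. Write $A \coloneqq I - \gamma P^\pi$, which is then symmetric and positive definite (its eigenvalues lie in $[1-\gamma,1+\gamma]$), so $A^{1/2}, A^{-1/2}, A^{-1}$ are well defined, symmetric and positive definite. By \cref{lemmaconstant} the Gram matrix $\Phi_t^T\Phi_t$ is constant, so if $\Phi_0$ has full column rank so does $\Phi_t$ for all $t$, whence $\Phi_t^T A\Phi_t$ is invertible throughout. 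Thus $w_{\Phi}^\ast = (\Phi^T A\Phi)^{-1}\Phi^T R^\pi$ and $\Phi w_{\Phi}^\ast = B R^\pi$, where $B \coloneqq \Phi(\Phi^T A\Phi)^{-1}\Phi^T$ is symmetric.

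\textbf{Step 2 (differentiate and simplify).} Because $A^{-1}$ is symmetric, $\tfrac{df}{dt} = 2\,\text{Tr}(\Phi_t^T A^{-1}\dot\Phi_t)$. Substituting \cref{eq:deep-linear-td-ode-two-time-scale} and using cyclicity of the trace together with the identities $(w_\Phi^\ast)^T\Phi^T = (R^\pi)^T B$ and $R^\pi - A\Phi w_\Phi^\ast = (I-AB)R^\pi$, and finally the hypothesis $R^\pi(R^\pi)^T = I$, a short computation gives
\[
\frac{df}{dt} = 2\eta_\Phi\big(\text{Tr}(BA^{-1}) - \text{Tr}(B^2)\big).
\]
Next I would whiten: set $\Psi \coloneqq A^{1/2}\Phi$, so $\Phi^T A\Phi = \Psi^T\Psi$ and $B = A^{-1/2}QA^{-1/2}$, where $Q \coloneqq \Psi(\Psi^T\Psi)^{-1}\Psi^T$ is the orthogonal projector onto the column space of $\Psi$ (symmetric and idempotent). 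Cyclicity then yields $\text{Tr}(BA^{-1}) = \text{Tr}(QA^{-2})$ and $\text{Tr}(B^2) = \text{Tr}(QA^{-1}QA^{-1})$. Writing $M \coloneqq QA^{-1}$ and using $Q^2 = Q$ and the symmetry of $Q$ and $A^{-1}$, these are exactly $\text{Tr}(M^T M)$ and $\text{Tr}(M^2)$, so
\[
\frac{df}{dt} = 2\eta_\Phi\big(\text{Tr}(M^T M) - \text{Tr}(M^2)\big).
\]

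\textbf{Step 3 (non-negativity and strictness).} The proof is finished by the elementary fact that $\text{Tr}(M^T M) \geq \text{Tr}(M^2)$ for every real square matrix $M$, with equality iff $M$ is symmetric: splitting $M = S + K$ into symmetric part $S$ and antisymmetric part $K$, one has $\text{Tr}(SK) = 0$, hence $\text{Tr}(M^T M) - \text{Tr}(M^2) = -2\,\text{Tr}(K^2) = 2\,\text{Tr}(K^T K) \geq 0$. This gives $df/dt \geq 0$. For the strict statement, argue the contrapositive: if $df/dt = 0$ then $M = QA^{-1}$ is symmetric, i.e. $Q$ commutes with $A$; rewriting the ODE via $I - AB = A^{1/2}(I-Q)A^{-1/2}$ and $R^\pi(R^\pi)^T = I$ gives $\dot\Phi_t = \eta_\Phi A^{1/2}(I-Q)A^{-1}\Psi(\Psi^T\Psi)^{-1}$, and since $Q$ commutes with $A^{-1}$ and $Q\Psi = \Psi$, the factor $(I-Q)A^{-1}\Psi = A^{-1}(I-Q)\Psi = 0$, so $\dot\Phi_t = 0$. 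Hence $\dot\Phi_t \neq 0$ forces $df/dt > 0$. The one genuinely non-routine step is spotting the whitening change of variables $\Psi = A^{1/2}\Phi$, which turns $B$ into a conjugated orthogonal projector and exposes the $\text{Tr}(M^T M) \geq \text{Tr}(M^2)$ structure; everything else is bookkeeping with trace identities and the hypothesis $R^\pi(R^\pi)^T = I$.
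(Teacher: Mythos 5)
Your proof is correct, and its skeleton matches the paper's: both plug the TD fixed point into the two time-scale ODE, use $R^\pi(R^\pi)^T=I$ together with the symmetry of $A$ (and $D^\pi$ being a multiple of the identity), and whiten by $A^{1/2}$ so that the orthogonal projector $Q=A^{1/2}\Phi(\Phi^TA\Phi)^{-1}\Phi^TA^{1/2}$ appears. Where you genuinely diverge is the finishing argument. The paper keeps $V^\pi(V^\pi)^T=|\mathcal{X}|^{-2}A^{-1}(A^{-1})^T$ inside the dynamics and writes $df/dt$ directly as a constant times $\mathrm{Tr}\left(v^T(I-Q)v\right)$ with $v=A^{-1/2}\Phi L$, $LL^T=(\Phi^TA\Phi)^{-1}$; non-negativity is then immediate because $I-Q$ is a projector, and strictness is read off from the same factorization since $\dot\Phi_t\propto A^{1/2}(I-Q)vL^T$ vanishes exactly when $(I-Q)v=0$. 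You instead collapse the derivative to the scalar identity $df/dt=2\eta_\Phi\left(\mathrm{Tr}(BA^{-1})-\mathrm{Tr}(B^2)\right)$, recognize it as $\mathrm{Tr}(M^TM)-\mathrm{Tr}(M^2)$ for $M=QA^{-1}$, and invoke the general fact that this gap is twice the squared norm of the antisymmetric part of $M$; equality then means $Q$ commutes with $A$, and you separately check that commutation forces $\dot\Phi_t=0$. The two computations are equivalent (the paper's trace also reduces to $\mathrm{Tr}(BA^{-1})-\mathrm{Tr}(B^2)$), but your route yields a cleaner closed form for $df/dt$ and an interpretable equality condition --- $Q$ commuting with $A$, i.e.\ the column space of $A^{1/2}\Phi$ being $A$-invariant, which previews \cref{corocritical} --- at the cost of a slightly more indirect strictness step, whereas the paper gets strictness for free from one factorization. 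Two minor pluses on your side: you bypass $V^\pi$ entirely by writing $\Phi w_\Phi^\ast=BR^\pi$, and you make explicit (via \cref{lemmaconstant} and full column rank of $\Phi_0$) the invertibility of $\Phi_t^TA\Phi_t$ that the paper's proof assumes implicitly.
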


We will discuss the implication of   $R^\pi(R^\pi)^T=I$ in \cref{subsec:random}. Before, that, we note a critical observation: by \cref{lemmaconstant} and \cref{theoremspectral}, we see that the two time-scale end-to-end linear TD dynamics carries out gradient-based optimization to solve the constrained $k$-PCA problem (\cref{eq:k-pca}). A key requirement is to satisfy the orthonormal constraints. Fortunately, as \cref{lemmaconstant} suggests, if we initialize $\Phi_0$ to be orthonormal, the constraints $\Phi_t^T\Phi_t=I$ is implicitly satisfied throughout the dynamics.

\paragraph{Critical points span eigenspaces of $P^\pi$.} When assumptions in \cref{theoremspectral} are imposed, we can strengthen our characterizations of the critical points of the end-to-end linear TD dynamics, by relating them to
eigen subspaces of $P^\pi$. Here, we limit our attention to \emph{non-trivial} critical points that satisfy the constraints $\Phi^T\Phi=\Phi_0^T\Phi_0$ as other critical points cannot be arrived at from the learning dynamics (\cref{lemmaconstant}).

\begin{restatable}{corollary}{corocritical}\label{corocritical} When assumptions in \cref{theoremspectral} are satisfied, and further, assume $\Phi_0^T D^\pi(I-\gamma P^\pi) \Phi_0$ is invertible. A representation $\Phi$ is a non-trivial critical point of the two time-scale dynamics (\cref{eq:deep-linear-td-ode-two-time-scale}) if and only if it spans an invariant subspace of $P^\pi$, i.e., $
   P^\pi \Phi \in \text{span} \left(\Phi\right)$.
\end{restatable}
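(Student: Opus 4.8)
The plan is to reduce the claim to a statement about the column space $S\coloneqq\text{span}(\Phi)$ and then close it with elementary linear algebra. First I would record the two consequences of $P^\pi$ being symmetric that make the reward dependence of the critical-point equation disappear: a symmetric stochastic matrix is doubly stochastic, so the (ergodic) stationary distribution is uniform and $D^\pi=\tfrac{1}{|\mathcal{X}|}I$; and its eigenvalues lie in $[-1,1]$, so $I-\gamma P^\pi$ is symmetric with eigenvalues in $[1-\gamma,1+\gamma]$, hence symmetric positive definite. Consequently $\Phi^T D^\pi(I-\gamma P^\pi)\Phi$ is a positive multiple of a symmetric positive definite matrix whenever $\Phi$ has full column rank; in particular the hypothesis that $\Phi_0^T D^\pi(I-\gamma P^\pi)\Phi_0$ is invertible is exactly the statement that $\Phi_0$ has rank $k$, and combining this with the non-triviality constraint $\Phi^T\Phi=\Phi_0^T\Phi_0$ and \cref{lemmaconstant} shows every non-trivial critical point $\Phi$ also has rank $k$, so the invertibility required to invoke \cref{lemmacritical} holds automatically there.

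Next I would feed these facts into \cref{lemmacritical}. A critical point of the two time-scale ODE \cref{eq:deep-linear-td-ode-two-time-scale} is a $\Phi$ with $D^\pi\bigl(R^\pi-(I-\gamma P^\pi)\Phi w_\Phi^\ast\bigr)(w_\Phi^\ast)^T=0$, which is precisely the $\frac{d\Phi_t}{dt}=0$ equation treated in \cref{lemmacritical}; hence such a $\Phi$ satisfies $D^\pi R^\pi(D^\pi R^\pi)^T\Phi\in\text{span}\bigl(D^\pi(I-\gamma P^\pi)\Phi\bigr)$. Substituting $D^\pi=\tfrac{1}{|\mathcal{X}|}I$ and $R^\pi(R^\pi)^T=I$ collapses the left side to $\Phi$ and the right side to $\text{span}\bigl((I-\gamma P^\pi)\Phi\bigr)$, so the condition reads $S\subseteq\text{span}\bigl((I-\gamma P^\pi)\Phi\bigr)$; since $I-\gamma P^\pi$ is invertible and $\Phi$ has rank $k$, the latter subspace also has dimension $k$, so in fact $S=\text{span}\bigl((I-\gamma P^\pi)\Phi\bigr)$. (One can instead substitute $w_\Phi^\ast$ explicitly and use $R^\pi(R^\pi)^T=I$ twice to rewrite the critical-point equation as $\Phi=(I-\gamma P^\pi)\Phi\,M^{-1}\Phi^T\Phi$ with $M\coloneqq\Phi^T(I-\gamma P^\pi)\Phi$; since $M^{-1}\Phi^T\Phi$ is invertible this gives the same span equality directly.) This algebraic step, in which the $R^\pi(R^\pi)^T=I$ hypothesis is exactly what erases the reward, is the crux; the rest is routine.

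Finally I would prove the elementary fact that, for $\gamma\in(0,1)$, $S=\text{span}\bigl((I-\gamma P^\pi)\Phi\bigr)$ if and only if $P^\pi\Phi\in S$. If $(I-\gamma P^\pi)S=S$, then for $v\in S$ we have $\gamma P^\pi v=v-(I-\gamma P^\pi)v\in S$, hence $P^\pi S\subseteq S$; conversely $P^\pi S\subseteq S$ gives $(I-\gamma P^\pi)S\subseteq S$, and this inclusion is an equality by a dimension count since $I-\gamma P^\pi$ is invertible. This yields the forward direction. For the converse, from $P^\pi\Phi\in S$ the equivalences run backwards to the span equality, so $\Phi=(I-\gamma P^\pi)\Phi B$ for some $B$; left-multiplying by $\Phi^T$ forces $B=M^{-1}\Phi^T\Phi$, i.e.\ the critical-point identity of the previous paragraph holds, so $(\Phi,w_\Phi^\ast)$ is a non-trivial critical point. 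I expect the only points needing care to be the full-rank and invertibility bookkeeping along the trajectory (handled by \cref{lemmaconstant}) and keeping track of the scalar factor $\tfrac{1}{|\mathcal{X}|}$ while applying $R^\pi(R^\pi)^T=I$ at the right moments.
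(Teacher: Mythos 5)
Your proposal is correct and follows essentially the same route as the paper: invoke \cref{lemmacritical} (with the rank/invertibility bookkeeping supplied by \cref{lemmaconstant} and the assumption on $\Phi_0$), use $D^\pi=|\mathcal{X}|^{-1}I$ and $R^\pi(R^\pi)^T=I$ to collapse the critical-point condition to $\Phi\in\text{span}\left((I-\gamma P^\pi)\Phi\right)$, and then show this is equivalent to $P^\pi\Phi\in\text{span}(\Phi)$. Your final equivalence is phrased via subspace dimension counts where the paper manipulates explicit $k\times k$ matrices $B$ and $(I-\gamma B)^{-1}$, but this is the same argument in substance (both implicitly use $\gamma>0$).
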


\subsection{Fitting values with randomly sampled rewards}\label{subsec:random}
Recall that $R^\pi\in\mathbb{R}^{|\mathcal{X}|\times h}$ can be understood as the concatenation of $h$ different reward functions, and we let $R_i^\pi\in\mathbb{R}^{|\mathcal{X}|}$ denote its $i$-th column for $1\leq i\leq h$. 
An especially interesting case is when entries of each one of the $h$ such reward columns are sampled i.i.d. 

\begin{restatable}{lemma}{lemmaexample}\label{lemmaexample} (\textbf{Randomly sampled rewards}) When the each entry of $R_{i}^\pi$ is sampled i.i.d. from any zero-mean distribution with finite variance $\sigma^2/\sqrt{h}$ for some constant $\sigma>0$, for  $1\leq i\leq h$. Then as the number of reward columns $h\rightarrow\infty$, we have
$
    R^\pi(R^\pi)^T \rightarrow \sigma^2 I_{|\mathcal{X}|\times |\mathcal{X}|}$ almost surely. 
\end{restatable}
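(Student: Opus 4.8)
The plan is to prove the convergence entry by entry and then invoke finite-dimensionality. Index the entries of $R^\pi\in\mathbb{R}^{|\mathcal{X}|\times h}$ as $R^\pi_{xj}$ for $x\in\mathcal{X}$ and $1\le j\le h$; by assumption these are i.i.d.\ and zero-mean, and I will write $R^\pi_{xj}=\xi_{xj}/\sqrt{h}$ with $(\xi_{xj})$ i.i.d., $\mathbb{E}[\xi_{xj}]=0$, $\mathbb{E}[\xi_{xj}^2]=\sigma^2$ — equivalently, the common variance of the reward entries scales like $\sigma^2/h$, which is the scaling the stated conclusion forces on the diagonal. Then
\[
    \bigl(R^\pi(R^\pi)^T\bigr)_{xy}=\sum_{j=1}^h R^\pi_{xj}R^\pi_{yj}=\frac1h\sum_{j=1}^h \xi_{xj}\xi_{yj},
\]
so the statement reduces to showing this average tends almost surely to $\sigma^2$ when $x=y$ and to $0$ when $x\neq y$.

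First I would handle the diagonal. For fixed $x$, the variables $(\xi_{xj}^2)_{j\ge 1}$ are i.i.d.\ and integrable ($\mathbb{E}[\xi_{xj}^2]=\sigma^2<\infty$), so Kolmogorov's strong law of large numbers gives $\frac1h\sum_{j=1}^h\xi_{xj}^2\to\sigma^2$ almost surely. For an off-diagonal pair $x\neq y$, the products $(\xi_{xj}\xi_{yj})_{j\ge1}$ are i.i.d.\ across $j$ (the columns of $R^\pi$ are independent) and integrable, since $\xi_{xj}\perp\xi_{yj}$ with finite first absolute moments gives $\mathbb{E}|\xi_{xj}\xi_{yj}|=\mathbb{E}|\xi_{xj}|\,\mathbb{E}|\xi_{yj}|<\infty$; moreover $\mathbb{E}[\xi_{xj}\xi_{yj}]=\mathbb{E}[\xi_{xj}]\mathbb{E}[\xi_{yj}]=0$. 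Hence the strong law again gives $\frac1h\sum_{j=1}^h\xi_{xj}\xi_{yj}\to0$ almost surely.

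Finally I would intersect the $|\mathcal{X}|^2$ almost-sure events obtained above, one per pair $(x,y)$; since $\mathcal{X}$ is finite, the intersection still has probability one, and on it every entry of $R^\pi(R^\pi)^T$ converges to the corresponding entry of $\sigma^2 I_{|\mathcal{X}|\times|\mathcal{X}|}$. As these are matrices of fixed finite dimension, entrywise convergence coincides with convergence in any matrix norm, which is the claim.

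The one place to be careful is the modeling assumption: the strong-law step uses that, after the $1/\sqrt{h}$ rescaling, the $\xi_{xj}$ have a single fixed distribution. If instead the per-entry distribution is genuinely allowed to change with $h$, the displayed average is a triangular array and the plain strong law does not apply; one would then fix $\varepsilon>0$, bound $\mathbb{P}(|(R^\pi(R^\pi)^T)_{xy}-\sigma^2 \mathbf{1}\{x=y\}|>\varepsilon)$ by a higher-moment Markov inequality and sum over $h$ via Borel--Cantelli, at the cost of a finite-fourth- (or higher-) moment hypothesis that ``finite variance'' does not by itself supply. I expect this to be the only real obstacle; the rest is routine.
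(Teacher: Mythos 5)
Your proof follows essentially the same route as the paper's: expand $\bigl(R^\pi(R^\pi)^T\bigr)_{xy}=\sum_{n=1}^h R^\pi_{xn}R^\pi_{yn}$ entrywise, apply the strong law of large numbers separately to the diagonal (limit $\sigma^2$) and off-diagonal (limit $0$) entries, and intersect the finitely many almost-sure events. Your two side remarks are correct refinements of the paper's terse argument: the stated variance $\sigma^2/\sqrt{h}$ should indeed be $\sigma^2/h$ (the paper's own proof treats the squared entries as having mean $\sigma^2/h$), and the $h$-dependence of the entry distribution means the plain SLLN strictly requires either your fixed-array coupling $R^\pi_{xj}=\xi_{xj}/\sqrt{h}$ or a triangular-array Borel--Cantelli argument, a subtlety the paper's one-line appeal to ``the law of large numbers'' glosses over.
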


\begin{figure}[t]
    \centering
    \includegraphics[keepaspectratio,width=.45\textwidth]{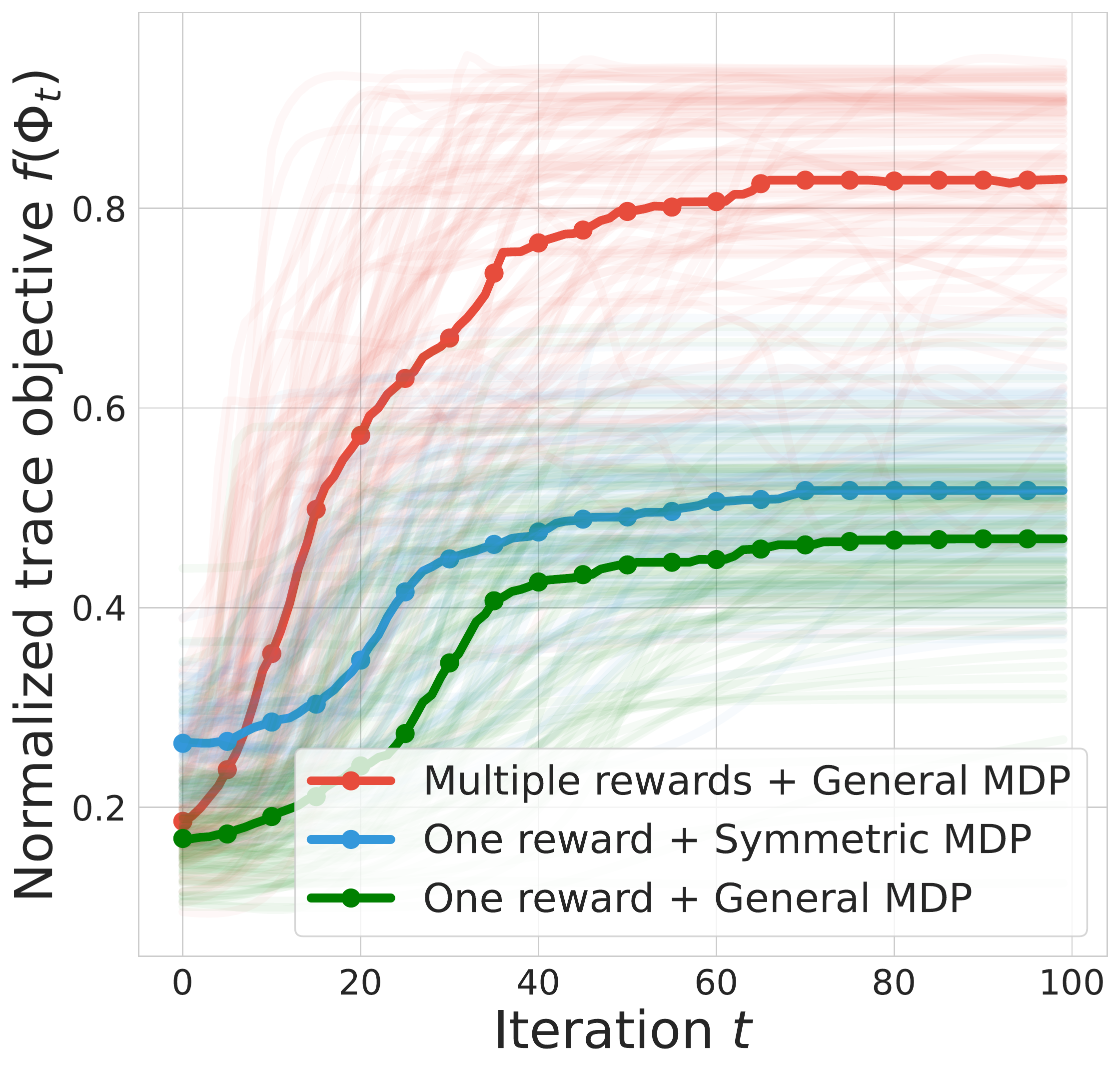}
     \caption{The normalized trace objective $f(\Phi_t)$ under the end-to-end linear TD dynamics. We generate $100$ random MDPs and solid curves show the median results. We consider three scenarios: (1) multiple reward functions $h=5$ with general $P^\pi$; (2) single reward $h=1$ with symmetric $P^\pi$; (3) single reward with general $P^\pi$. The trace objective takes an upward trend though the improvement is not necessarily monotonic for randomly sampled MDPs. When predicting multiple random rewards, the representations seem to capture more spectral information about $P^\pi$ as measured by the trace objective.}
    \label{fig:trace}
\end{figure}


\begin{figure}[t]
    \centering
    \includegraphics[keepaspectratio,width=.45\textwidth]{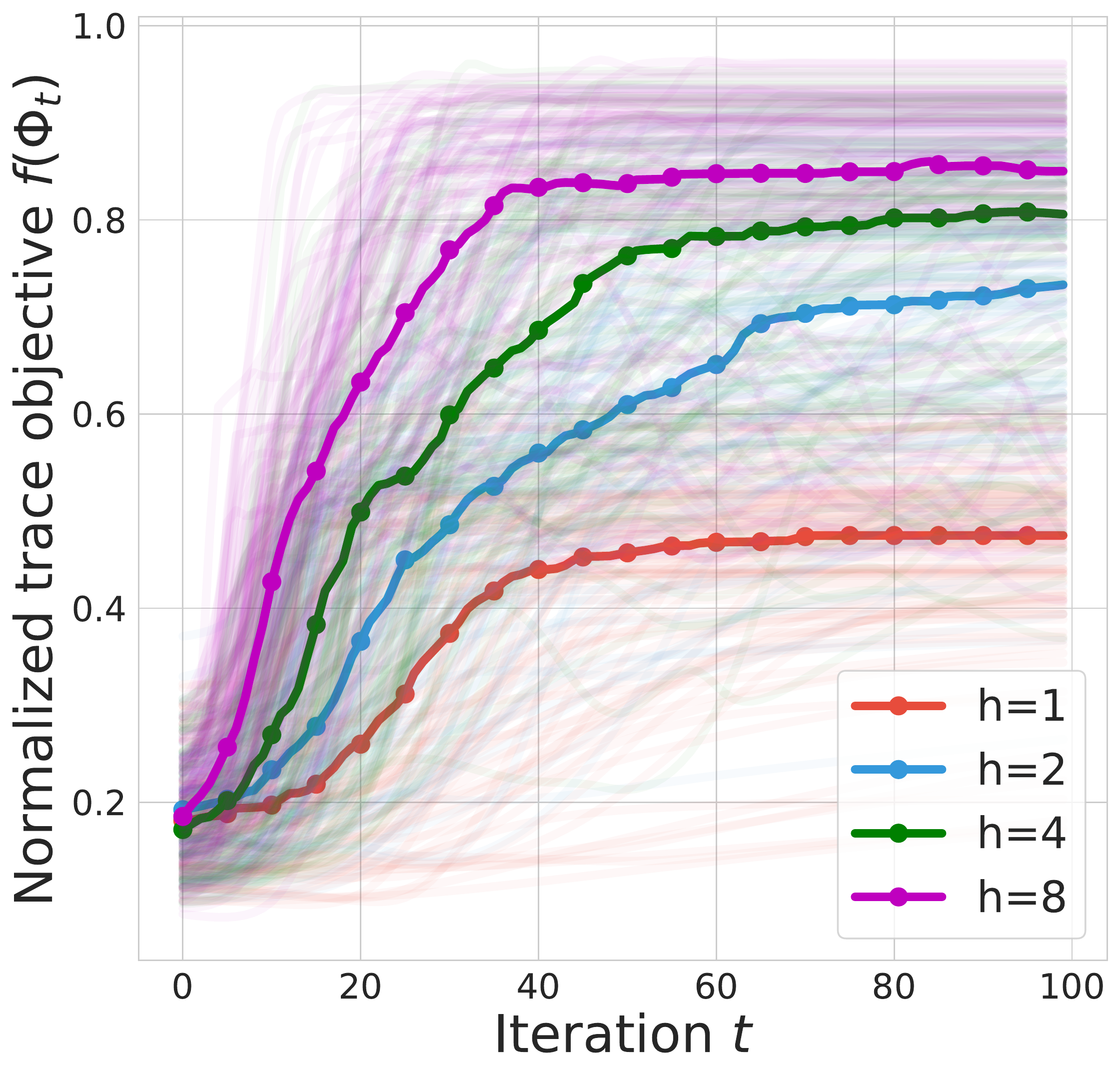}
     \caption{The normalized trace objective $f(\Phi_t)$ under the end-to-end linear TD dynamics. We generate $100$ random MDPs with general $P^\pi$ and solid curves show the median results. We study the effect of the number of random rewards $h$, with $h\in\{1,2,4,8\}$. As $h$ increases, the median trace objective $f(\Phi_t)$ improves both at higher rates and achieves higher asymptotic values, indicating that the representations converge to the top $k$ eigensubspace of $P^\pi$. In this case, the state space $|\mathcal{X}|=30$ and $h=4$ seems to suffice for learning very good representations.}
    \label{fig:trace-rewards}
\end{figure}

Now, it is useful to recall \cref{theoremvalue} which showed that the weighted value errors decay over time under the end-to-end linear TD dynamics. In light of this previous result, we can interpret \cref{theoremspectral} and its connections to randomly sampled rewards as follows: if starting from a representation $\Phi_t$, one can accurately fit value functions $V_i^\pi$ generated by multiple random rewards $R_i^\pi$ simultaneously, then $\Phi_t$ must capture spectral information about the transition matrix $P^\pi$. 

\paragraph{Numerical experiments on randomly sampled MDPs.} In practice, the symmetry assumption in \cref{theoremspectral} is likely to be violated. In \cref{fig:trace} we evaluate the trace objective $f(\Phi_t)$ under the two time-scale dynamics with randomly generated MDPs. We compare a few cases: (1) multiple reward functions $h=5$ with general $P^\pi$; (2) single reward $h=1$ with symmetric $P^\pi$; (3) single reward with general $P^\pi$. The results suggest that the upward trend of the trace objective is fairly persistent even in asymmetric MDPs, implying that the dynamics entails $\Phi_t$ to capture useful information about $P^\pi$ in general. However, the improvement in $f$ is not monotonic in general. 

Another critical observation is that by fitting more random reward functions at the same time (red $h=5$ vs. green $h=1$), the dynamics seemingly captures more spectral information about the transition. The result also implies that in practice, we probably do not need a very large number $h$ of rewards, see \cref{sec:exp} for an ablation experiment.

\paragraph{How many reward functions are enough?} The above arguments imply that we might need a large number of reward functions for the condition $R^\pi(R^\pi)^T=I$ to hold. In practice, a key empirical question is how many random reward functions suffice for the end-to-end linear TD dynamics to learn good representations. We carry out an ablation study in \cref{fig:trace-rewards} where we consider $h\in\{1,2,4,8\}$. As $h$ increases, the trace objective $f(\Phi_t)$ both improves at a higher average rate, and achieves higher asymptotic values. In our tabular experiments, we consider $k=2$ dimensions for the representations. Intriguingly, $h=8$ seems to suffice to learning good representations (when the normalized trace objective $\geq 0.8$; note that if $P^\pi$ is symmetric, the maximum normalized trace objective is $1$), which is noticeably smaller than the number of states $|\mathcal{X}|=30$. Though the conclusion certainly depends on the specific MDPs, the result suggests in practice, potentially a relatively small number of random rewards is enough to induce good representations.

\paragraph{Connections to auxiliary tasks in prior work.} The idea of fitting value functions of randomly generated rewards bears close connections to a number of prior approach for building auxiliary tasks in deep RL, such as random cumulant predictions \citep{dabney2021value,zheng2021learning}. Random cumulant predictions consist in learning $h$ value functions starting from a single representation $\Phi_t$, with different value \emph{head}s. In our terminology, such value heads can be understood as separate weight parameter $w_i\in\mathbb{R}^k,1\leq i\leq h$, each dedicated to learning value function $V_i^\pi\in\mathbb{R}^{|\mathcal{X}|}$ with randomly generated reward $R_i^\pi$. This is mathematically equivalent to our matrix notation with $w\in\mathbb{R}^{k\times h}$ throughout the paper.

\section{Discussion on related work} \label{sec:discussion}

A more comprehensive discussion is in \cref{appendix:related}.

\paragraph{From linear TD to end-to-end linear TD.} Much of the prior work has focused on the linear TD setup \citep{sutton1998,tsitsiklis1996analysis}, where the representations are assumed fixed throughout learning.
Closely related to our work is \citep{lyle2021effect} where they proposed to understand the learning dynamics of end-to-end linear TD through its corresponding ODE system. Since such an ODE system is highly non-linear, it is more challenging to provide generic characterizations without restrictive assumptions. \citet{lyle2021effect} bypasses the non-linearity issue by essentially assuming a fixed weight parameter $w_t\equiv w$. In light of \cref{eq:deep-linear-td-ode}, they study the following dynamics
\begin{align*}
    \frac{d\Phi_t}{dt} = \eta_\Phi \cdot D^\pi\left(R^\pi -(I-\gamma P^\pi)\Phi_t w\right)w^T
\end{align*}
which effectively reduces  to a linear system in $\Phi_t$ and is more amenable to analysis.

Our work differs in a few important aspects. Firstly, our analysis adheres strictly to the vanilla end-to-end linear TD  (\cref{eq:deep-linear-td-ode}) or two-time scale end-to-end linear TD  (\cref{eq:deep-linear-td-ode-two-time-scale}), without imposing the constant weight assumption as in \citep{lyle2019comparative}. Our analysis is also slightly more general, as it consists in constructing a few scalar functions that characterize the dynamics. This is more applicable to general ODEs where obtaining exact solutions is not tractable.

\paragraph{TD with non-linear function approximations.} Along the line of research on TD-dynamics with non-linear function approximation, a closely related work is \citep{brandfonbrener2019geometric} where they established the convergence behavior of TD-learning with smooth homogeneous functions. A key requirement underlying their result is that the MDP is sufficiently reversible, which echos the assumption we make in \cref{theoremvalue}. However, under their framework there is no clear notion of representation as defined in the bi-linear case.

\begin{figure}[t]
    \centering
    \includegraphics[keepaspectratio,width=.45\textwidth]{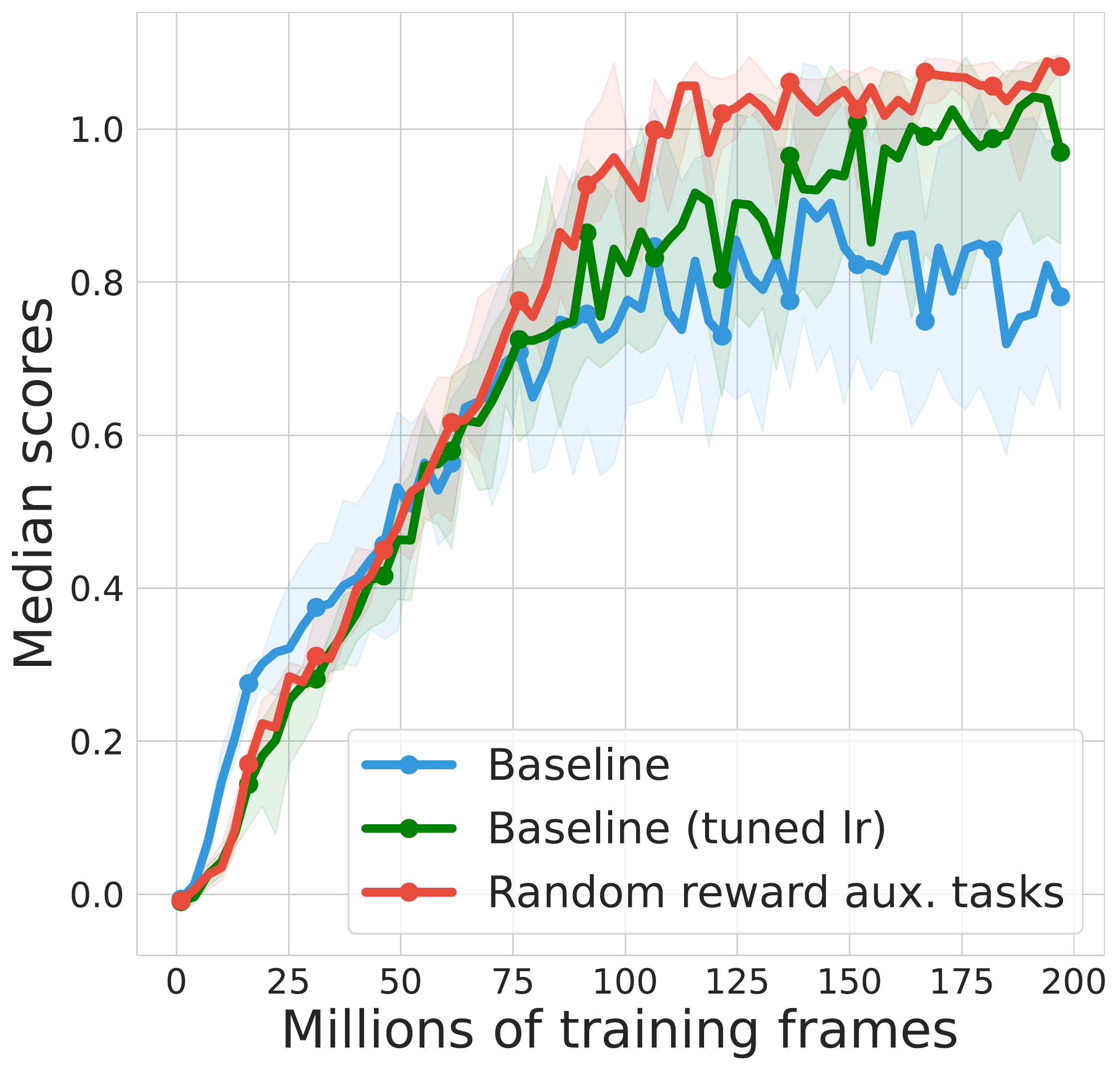}
     \caption{Comparison of training curves of median human normalized scores across 15 Atari games. We compare the baseline DQN, DQN with tuned learning rate and DQN with random value predictions as auxiliary tasks. The shaded areas show the $95\%$ bootstrapped confidence intervals averaged over $3$ seeds. The random value prediction tasks seem to provide marginal benefits over the tuned DQN. See \cref{appendix:exp} for further details.}
    \label{fig:atari-median}
\end{figure}

\paragraph{Self-predictive learning.} Motivated by value-based TD-learning, self-predictive learning directly employs the notion of bootstrapping to the representation space, which has produced a number of empirically successful implementations \citep{guo2020bootstrap,schwarzer2021dataefficient,guo2020bootstrap}. Recently,  \citet{tang2022understanding} proposes to understand the behavior of self-predictive learning through its corresponding ODE. From their discussion, we note that two time-scale dynamics is a generic way to enforcing optimization constraints. Our results in \cref{sec:spectral} build on this and show a similar characterization for the end-to-end linear TD.

\paragraph{Two time-scale learning.} \citet{levine2017shallow} proposed a combination of LSTD updates for the linear weights on top of DQN representations, which are shaped by gradient descents. In continuous time, this is effectively the two time-scale dynamics (Eqn~\eqref{eq:deep-linear-td-ode-two-time-scale}). While their discussion is empirically motivated, we focus on the representation learning aspect of such a learning dynamics.

\paragraph{Auxiliary tasks.} In deep RL literature, it has been empirically observed that certain extra training objectives, normally referred to as \emph{auxiliary tasks} (see, e.g., \citep{beattie2016deepmind,jaderberg2016reinforcement,bellemare2019geometric,fedus2019hyperbolic,dabney2021value,zheng2021learning}). Though the conventional wisdom is that such auxiliary tasks are useful in shaping representations, it is generally difficult to characterize exactly what representations are induced, despite some recent theoretical efforts \citep{lyle2021effect}. Our results formally demonstrate the potential benefits of random reward predictions when combined with TD-learning.

\section{Experiments}
\label{sec:exp}

We validate the potential usefulness of fitting randomly generated value functions as an auxiliary task in deep RL settings. We use DQN \citep{mnih2013} as a baseline, and generate random reward functions $R_i^\pi(x,a)$ via outputs of randomly initialized networks, following the practice of \citep{dabney2021value}. The common deep RL equivalent of the representation $\phi_x$ is the output of the torso network (usually a convnet). For the auxiliary task, we fit multiple value function by applying different head network (usually MLPs) on top of the common output $\phi_x$. Throughout, we update all network parameters with the same learning rate. We have ablated on increasing the learning rate for head networks, so that the algorithm is more in line with the two time-scale dynamics; however, we found that in practice, this tends to provide the torso network with less learning signal (as much is taken care of by the head network) and usually impedes performance. Understanding the theory-practice gap here would be an important future direction.

\cref{fig:atari-median} shows the result for DQN, DQN with tuned learning rate, and DQN with random value function auxiliary task. Overall, we see that the auxiliary task provides marginal improvement on the tuned DQN baseline, hinting at the potential usefulness of the auxiliary objective for representation learning. See \cref{appendix:exp} for further experimental details.

\section{Conclusion}
We have provided a number of theoretical analysis on the representation dynamics under end-to-end end-to-end linear TD dynamics. Under the reversibility assumption, we have showed that the representation evolves such that the value approximation error decreases over time. With further assumptions on the reward function, we establish that the representation dynamics carries out spectral decomposition on the transition matrix, leading to provably useful representations. This further implies that fitting random value functions is a principled auxiliary task for representation learning. 

Our work opens up a few interesting directions for further investigation, such as how to relax the reversibility assumptions, how to combine the end-to-end linear TD framework with nonlinear functions and extensions to the control case.

\bibliography{main}
\bibliographystyle{plainnat}

\clearpage
\onecolumn

\begin{appendix}

\section*{\centering APPENDICES: Towards a Better Understanding of Representation Dynamics under TD-learning}

\section{Derivation of continuous time ODEs for end-to-end linear TD}

Recall the squared prediction error in \cref{eq:sg-squared-error},
\begin{align*}
    L(\Phi,w) \coloneqq \left\lVert \text{sg}\left(R^\pi(x_t) + \gamma\phi_{x_{t+1}}^tw\right)-\phi_{x_t}^Tw\right\rVert_2^2.
\end{align*} 
We take an expectation over the state distribution $x_t\sim d^\pi, x_{t+1}\sim P^\pi(\cdot|x_t)$, the expected loss function can be expressed as follows
\begin{align*}
    \mathbb{E}[L] = \sum_{x} d^\pi(x) \left(\text{sg}\left(R^\pi(x) + \gamma \left(P^\pi \Phi w\right) (x)\right) - \left(\Phi w\right)(x) \right)^2,
\end{align*}
where $\left(\Phi w\right)(x)$ and $\left(P^\pi\Phi w\right)(x)$ denote the $x$-th coordinate of the corresponding vector $\Phi w\in\mathbb{R}^{|\mathcal{X}|}$ and $P^\pi \Phi w\in\mathbb{R}^{|\mathcal{X}|}$. The expected semi-gradient for $\Phi$ and $w$ can be computed as 
\begin{align}
   \partial_w \mathbb{E}[L] = -\Phi^T D^\pi \left(R^\pi - \left(I-\gamma P^\pi\right)\Phi w\right), \partial_\Phi \mathbb{E}[L] = -D^\pi \left(R^\pi - \left(I-\gamma P^\pi\right)\Phi w\right) w^T.\label{eq:deeptd-expected}
\end{align}
Recall that end-to-end linear TD is defined as follows
\begin{align*}
    \frac{dw_t}{dt}=-\eta_w \cdot \partial_w \mathbb{E}[L],\ \frac{d\Phi_t}{dt} = -\eta_\Phi \cdot \partial_w \mathbb{E}[L].
\end{align*}
Plugging into the formula in \cref{eq:deeptd-expected}, we arrive at the ODE system in \cref{lemmaode}.

Now, to derive the two time-scale ODE dynamics, we just need to replace the generic weight vector $w_t$ in the update to $\Phi_t$ above by the fixed point $w_{\Phi_t}^\ast$. This gives rise to the ODE in \cref{eq:deep-linear-td-ode-two-time-scale}.

\section{Proof}

\lemmaweightederror*
\begin{proof}
Let $v_{t,i}$ denote the $i$-th column of $\Phi w- V^\pi$ for $1\leq i\leq h$. We have
\begin{align*}
    E(\Phi,w) &= \text{Tr}\left(\left(\Phi w - V^\pi\right)^T D^\pi(I-\gamma P^\pi) \left(\Phi w - V^\pi\right)\right) \\
    &= \sum_{i=1}^h v_{t,i}^T D^\pi(I-\gamma P^\pi)v_{t,i}.
\end{align*}
Since $D^\pi(I-\gamma P^\pi)$ is PD \citep{sutton2016emphatic}, we have each of the $i$-th term above is non-negative and is only zero when $v_{t,i}=0$. This concludes the result.
\end{proof}

\theoremvalue*
\begin{proof}
For notational simplicity, let $A=D^\pi(I-\gamma P^\pi)$. Under the reversibility assumption, $A$ is symmetric $A^T=A$. We also define $v_t\coloneqq \Phi_t w_t - V^\pi$ and denote $v_{t,i}$ as its $i$-th column. Define the matrix $M=\left(\Phi w-V^\pi\right)^T A \left(\Phi w-V^\pi\right)$, then note that its $(i,j)$-thn component is
$
    v_{t,i} ^T A v_{t,j}
$. As a result, the weighted error rewrites as
\begin{align*}
    E(\Phi_t,w_t) = \sum_{i=1}^h M_{ii} = \sum_{i=1}^h \underbrace{\frac{1}{2}v_{t,i}^T Av_{t,i}}_{\eqqcolon M_{ii}}.
\end{align*}
Let $w_{t,i}$ be the $i$-th column of matrix $w_t$.
Recall by definition of the end-to-end linear TD dynamics, we have
\begin{align*}
    \frac{d\Phi_t}{dt} &= -\eta_\Phi\cdot A\left(\Phi_t w_t - V^\pi\right) w_t^T = -\eta_w\cdot A \sum_{i=1}^h v_{t,i} w_{t,i}^T \\
    \frac{dw_{t,i}}{dt} &=  -\Phi_t^T A v_{t,i}.
\end{align*}
In order to derive $\partial_{\Phi_t}E(\Phi_t,w_t)$ and $\partial_{w_t}E(\Phi_t,w_t)$, note a few useful facts as follows,
\begin{align*}
    \partial_{\Phi_t} M_{ii} &= \frac{1}{2}(A+A^T)v_{t,i}w_{t,i}^T = Av_{t,i}w_{t,i}^T,\\
    \partial_{w_t} M_{ii} &= \frac{1}{2}\Phi^T (A+A^T)v_{t,i} = \Phi^TAv_{t,i}.
\end{align*}
As a result, we can verify
\begin{align*}
     \frac{d\Phi_t}{dt} &= -\eta_\Phi\sum_{i=1}^h \partial_{\Phi_t} M_{ii} = -\eta_\Phi \partial_{\Phi_t} E(\Phi_t,w_t) \\
     \frac{dw_t}{dt} &= -\eta_w\sum_{i=1}^h \partial_{w_t} M_{ii} = -\eta_w \partial_{w_t} E(\Phi_t,w_t) ,
\end{align*}
where (a) comes from the fact that $A=A^T$. Now with chain rule, we have
\begin{align*}
    \frac{dE(\Phi_t,w_t)}{dt} &=  \text{Tr}\left(\left(\partial_{\Phi_t}E(\Phi_t,w_t)\right)^T \frac{d\Phi_t}{dt}\right) +  \text{Tr}\left(\left(\partial_{w_t}E(\Phi_t,w_t)\right)^T \frac{dw_t}{dt}\right) \\
    &= -\left(\frac{1}{\eta_\Phi}\left\lVert \frac{d\Phi_t}{dt}\right\rVert_2^2 + \frac{1}{\eta_w} \left\lVert \frac{dw_t}{dt}\right\rVert_2^2\right)\leq 0,
\end{align*}
which is strictly negative if $(\Phi_t,w_t)$ is not at a critical point. The proof is hence concluded.
\end{proof}

\lemmacritical*
\begin{proof}
For convenience, define $A\coloneqq D^\pi(I-\gamma P^\pi)$. 
We set $\frac{dw_t}{dt}=0$ and $\frac{d\Phi_t}{dt}=0$. It is straightforward to see that $\frac{dw_t}{dt}=0$ effectively allows us to derive the TD fixed point 
\begin{align*}
    w_t = \left(\Phi_t^T A\Phi_t\right)^{-1}\Phi_t^TAV^\pi \eqqcolon w_{\Phi_t}^\ast. 
\end{align*}
Note that for the above expression, we have used the assumption that $\Phi_t^TA\Phi_t$ is invertible.
Then, solving for $\frac{d\Phi_t}{dt}=0$ and noting that $w_t=w_{\Phi_t}^\ast$, we have
\begin{align*}
    A\Phi_t\left(\Phi_t^TA\Phi_t\right)^{-1}\Phi_t^T AV^\pi(V^\pi)^T A^T\Phi (\Phi_t^T A^T\Phi_t)^{-1} = AV^\pi(V^\pi)^T A^T\Phi (\Phi_t^T A^T\Phi_t)^{-1}.
\end{align*}
Cancelling out the invertible matrix at the end, we have 
\begin{align}
    \underbrace{A\Phi_t\left(\Phi_t^TA\Phi_t\right)^{-1}\Phi_t^T}_{M} AV^\pi(V^\pi)^T A^T\Phi  = AV^\pi(V^\pi)^T A^T\Phi.\label{eq:eq1}
\end{align}
Now, consider the projection matrix $M$, which satisfies $M^2=M$. We exploit the following useful property of the projection matrix
\begin{align*}
    Mx = \arg\min_{y\in\text{span}(A\Phi)}\left\lVert x-y\right\rVert_{(A^T)^{-1}}^2,
\end{align*}
where $\left\lVert x-y\right\rVert_{(A^{-1})^T}^2 \coloneqq (x-y)(A^T)^{-1}(x-y)$ is the squared weighted norm of $x-y$ under the PD matrix $(A^T)^{-1}$ (we show that this matrix is indeed PD towards the end of the proof). Applying the result of  \cref{lemmaprojection} to each column of the vector $AV^\pi(V^\pi)^TA^T \Phi$, we have \cref{eq:eq1} is equivalent to below
\begin{align}
    AV^\pi(V^\pi)^TA^T \Phi \in \text{span}\left(A\Phi\right).\label{eq:eq2}
\end{align}
From \cref{eq:eq2}, we derive a set of equivalent conditions.
\begin{align*}
     AV^\pi(V^\pi)^TA^T \Phi \in \text{span}\left(A\Phi\right)
    &\Longleftrightarrow_{(a)} \exists B\in\mathbb{R}^{k\times k}, \ AV^\pi(V^\pi)^TA^T \Phi = A\Phi B \\
    &\Longleftrightarrow_{(b)} (I-\gamma P^\pi)^{-1}R^\pi(R^\pi)^T (D^\pi)^T \Phi = \Phi B  \\
    &\Longleftrightarrow_{(c)} D^\pi R^\pi(R^\pi)^T (D^\pi)^T \Phi =  A \Phi B  \\
    &\Longleftrightarrow_{(d)} D^\pi R^\pi(R^\pi)^T (D^\pi)^T \Phi \in \text{span}(A\Phi).  
\end{align*}
Here, (a) follows from the definition of $\text{span}(A\Phi)$; (b) follows by canceling $A$ on both sides of the equation; (c) follows from some straightforward algebraic manipulations; (d) follows from the definition of $\text{span}(A\Phi)$. This concludes the proof.

\paragraph{PD of matrix $(A^T)^{-1}$.} Recall that $A$ is PD, and by definition this means $x^TAx\geq 0$ and is only zero for $x=0$. Given any $x$, we want to show 
\begin{align*}
    x^T (A^T)^{-1} x \geq 0
\end{align*}
and is only zero for $x=0$. To see this, for any fixed vector $x$ we define $y=A^{-1}x$. Since $A$ is PD, $A^{-1}$ is invertible and $x=0$ if and only if $y=0$. Now, we have
\begin{align*}
    x^T (A^T)^{-1} x = y^T A y , 
\end{align*}
which is non-negative for all $y\neq 0$ and is only zero when $y=0$, in which case $x=0$. This shows the matrix $(A^T)^{-1}$, though asymmetric in general, is also PD.

\end{proof}

\begin{restatable}{lemma}{lemmaprojection}\label{lemmaprojection}
For any PD matrix $D$ and matrix $A$ with compatible shapes, define the projection matrix $M\coloneqq A(A^TDA)^{-1}A^TD$. Let $v$ be a vector. The following two results are equivalent: (1) $Mv=v$; (2) $v\in\text{span}(A)$.
\end{restatable}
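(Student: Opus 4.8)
\textbf{Proof proposal for \cref{lemmaprojection}.}

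The plan is to prove both implications directly from the algebraic structure of $M = A(A^T D A)^{-1} A^T D$, using only that $D$ is PD (so $A^T D A$ is invertible whenever $A$ has full column rank, which is the implicit hypothesis making $M$ well-defined) and that $M$ is a projection onto $\spanop(A)$. First I would record the two structural facts I need: (i) $M^2 = M$, which follows by direct cancellation since $M \cdot M = A(A^TDA)^{-1}(A^TDA)(A^TDA)^{-1}A^TD = A(A^TDA)^{-1}A^TD = M$; and (ii) $\spanop(M) \subseteq \spanop(A)$, which is immediate because every column of $M$ is $A$ times something, i.e.\ $M = A C$ with $C = (A^TDA)^{-1}A^TD$.

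For the direction $(1) \Rightarrow (2)$: if $Mv = v$, then $v = Mv = A(C v) \in \spanop(A)$ by fact (ii). This is the easy half and takes one line.

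For the direction $(2) \Rightarrow (1)$: suppose $v \in \spanop(A)$, so $v = A u$ for some vector $u$. Then
\begin{align*}
    M v = A(A^TDA)^{-1}A^T D A u = A(A^TDA)^{-1}(A^TDA) u = A u = v.
\end{align*}
So this direction is also a short computation once one writes $v = Au$; the only subtlety worth flagging is that this uses the invertibility of $A^T D A$, which holds because $D$ is PD and $A$ has full column rank (if $A^T D A x = 0$ then $x^T A^T D A x = \lVert Ax \rVert_D^2 = 0$ forces $Ax = 0$, hence $x = 0$ provided $A$ has independent columns). I would state this invertibility remark explicitly since the lemma's hypotheses phrase it only as ``matrix $A$ with compatible shapes.''

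There is no real obstacle here — both implications are one-line manipulations — so the main care is bookkeeping: making sure the cancellation $(A^TDA)^{-1}(A^TDA) = I$ is justified by the PD hypothesis on $D$, and noting that the lemma is exactly the general-matrix statement used inside the proof of \cref{lemmacritical} with $D$ there instantiated as $(A^T)^{-1}$ (shown PD at the end of that proof) and $A$ instantiated as $A\Phi$.
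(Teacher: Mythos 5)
Your proof is correct, but it takes a different route from the paper's. The paper argues via the variational characterization of $M$: it writes $Mv = \arg\min_{y\in\spanop(A)}\lVert v-y\rVert_D^2$ and then deduces both directions from feasibility and uniqueness of the minimizer. You instead prove both implications by direct matrix algebra: $(1)\Rightarrow(2)$ because $M=AC$ with $C=(A^TDA)^{-1}A^TD$, so $Mv=v$ exhibits $v$ as $A(Cv)$; and $(2)\Rightarrow(1)$ by writing $v=Au$ and cancelling $(A^TDA)^{-1}(A^TDA)$. Your route is more elementary and arguably more robust: it never invokes the least-squares interpretation (which, to identify the minimizer with $M$, really requires $D$ symmetric, whereas in the application inside \cref{lemmacritical} the weight matrix is $D=(A^T)^{-1}$, which is PD but generally asymmetric — your purely algebraic cancellations go through unchanged in that case). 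You also make explicit the full-column-rank hypothesis on $A$ needed for $A^TDA$ to be invertible, which the paper leaves implicit in the phrase ``compatible shapes.'' The paper's approach, in exchange, makes the geometric meaning of $M$ transparent and matches the way the lemma is motivated in the proof of \cref{lemmacritical}, but as a standalone argument yours is tighter.
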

\begin{proof}
By definition of the projection matrix $M$, we can write
\begin{align*}
    Mv = \arg\min_{y\in\text{span}(A)}\left\lVert v-y \right\rVert_D^2.
\end{align*}
Starting with condition (1), we know that $v$ is a feasible solution to the optimization problem above. This means $v\in\text{span}(A)$, which is condition (2). Starting with condition (2), we know that $v$ is a feasible solution to the optimization problem that defines the projection. It is straightforward to verify that $v$ is also the unique optimal solution (unique because $D$ is PD). Hence, we have $Mv=v$.
\end{proof}

\corovalue*
\begin{proof}
Using chain rule as in the proof of \cref{theoremvalue}, for any generic argument $(\Phi_t,w_t)$ to the weighted loss function, we have
\begin{align*}
    \frac{dE(\Phi_t,w_t)}{dt} &=  \text{Tr}\left(\left(\partial_{\Phi_t}E(\Phi_t,w_t)\right)^T \frac{d\Phi_t}{dt}\right) +  \text{Tr}\left(\left(\partial_{w_t}E(\Phi_t,w_t)\right)^T \frac{dw_t}{dt}\right).
\end{align*}
Recall that by definition the TD fixed point $w_{\Phi_t}^\ast$ satisfies the equation
\begin{align*}
    \Phi_t^T D^\pi(I-\gamma P^\pi) (\Phi_tw_{\Phi_t}^\ast-V^\pi) =0.
\end{align*}
Meanwhile, the proof in \cref{theoremvalue} has showed
\begin{align*}
    \partial_{w_t} E(\Phi_t,w_t) = -\Phi_t^T D^\pi(I-\gamma P^\pi) (\Phi_tw_t - V^\pi).
\end{align*}
Combining the above two results, we have $\partial_{w_{\Phi_t}^\ast} E(\Phi_t,w_{\Phi_t}^\ast)=0$. Further, using the result from \cref{theoremvalue} we have $\frac{d\Phi_t}{dt}=-\eta_\Phi\cdot \partial_{\Phi_t}E(\Phi_t,w_{\Phi}^\ast)$. This implies the following
\begin{align*}
    \frac{dE(\Phi_t,w_{\Phi_t}^\ast)}{dt} = -\frac{1}{\eta_\Phi}\left\lVert \frac{d\Phi_t}{dt}\right\rVert_2^2,
\end{align*}
which concludes the proof.
\end{proof}

\lemmaconstant*
\begin{proof}
Our result follows closely the proof technique in \citep{tang2022understanding}. Under the two time-scale TD dynamics, the weight $w_t=w_{\Phi_t}^\ast$ is the fixed point solution given the representation $\Phi_t$. 
Let $A_t=\Phi_tw_t\in\mathbb{R}^{|\mathcal{X}| \times h}$ be the matrix product. The chain rule combined with the first-order optimality condition on $w_t$ (which defines the fixed point) implies
\begin{align}
    \partial_{w_t} \mathbb{E}\left[L(\Phi_t,w_t)\right] = \Phi_t^T \partial_{A_t} \mathbb{E}\left[L(\Phi_t,w_t)\right] = 0.\label{eq:optimality-cond}
\end{align}
On the other hand, the semi-gradient update for $\Phi_t$ can be written as 
\begin{align*}
    \dot{\Phi}_t = -\partial_{A_t}\mathbb{E}\left[L(\Phi_t,w_t)\right](w_t)^T.
\end{align*}
Thanks to \cref{eq:optimality-cond}, we have 
\begin{align*}
    \Phi_t^T \dot{\Phi}_t = -\Phi_t^T \partial_{A_t}\mathbb{E}\left[L(\Phi_t,P_t)\right](P_t)^T = 0.
\end{align*}
Then, taking time derivative on the covariance matrix
\begin{align*}
    \frac{d}{dt} \left(\Phi_t^T\Phi_t\right) = \dot{\Phi}_t^T\Phi_t + \Phi_t^T\dot{\Phi}_t = \left(\Phi_t^T\dot{\Phi}_t\right)^T + \Phi_t^T\dot{\Phi}_t = 0,
\end{align*}
which implies that the covariance matrix is constant along the ODE dynamics.
\end{proof}

\theoremspectral*
\begin{proof}
Under the assumption $P^\pi$ is symmetric, we can verify the uniform distribution $D^\pi=|\mathcal{X}|^{-1}I$ is the stationary distribution. As before, for notational simplicity we let $A=D^\pi(I-\gamma P^\pi)$. Under the reversibility assumption, we have $A^T=A$. Rewriting the TD fixed point with $A$, we have
\begin{align*}
    w_\Phi^\ast = \left(\Phi^TA\Phi\right)^{-1}\Phi^TAV^\pi.
\end{align*}
Plugging $w_\Phi^\ast$ into \cref{eq:deep-linear-td-ode-two-time-scale}, the aggregate dynamics to $\Phi_t$ is
\begin{align*}
    \frac{d\Phi_t}{dt} = \eta_\Phi \cdot \left(I-A\Phi_t\left(\Phi_t^TA\Phi_t\right)^{-1}\Phi_t^T\right) AV^\pi(V^\pi)^T A^T\Phi_t \left(\Phi_t^TA^T\Phi_t\right)^{-1}.
\end{align*}
When $R^\pi(R^\pi)^{-1}=I$, we have
\begin{align*}
    V^\pi(V^\pi)^T = (I-\gamma P^\pi)^{-1}\left((I-\gamma P^\pi)^{-1}\right)^T = |\mathcal{X}|^{-2} A^{-1}(A^{-1})^T.
\end{align*}
Plugging this back into the dynamics for $\Phi_t$, we have
\begin{align*}
    \frac{d\Phi_t}{dt} = \eta_\Phi|\mathcal{X}|^{-2}\cdot \left(I-A\Phi_t\left(\Phi_t^TA\Phi_t\right)^{-1}\Phi_t^T\right) A A^{-1} (A^{-1})^T A\Phi_t \left(\Phi_t^TA^T\Phi_t\right)^{-1}.
\end{align*}
Now, consider the trace objective $f(\Phi_t)=\text{Tr}\left(\Phi_t^T (I-\gamma P^\pi)^{-1}\Phi_t\right) = |\mathcal{X}|^{-1}\text{Tr}\left(\Phi_t^T A^{-1} \Phi_t\right)$. Note that since $(\Phi_t^T A^T \Phi_t)^{-1}$ is symmetric and PD, we can write  $(\Phi_t^T A^T \Phi_t)^{-1}=LL^T$. Then we consider its time derivative
\begin{align*}
    \frac{df}{dt} &= \text{Tr}\left(\Phi_t^T A^{-1} \frac{d\Phi_t}{dt}\right) \\
    &= 2|\mathcal{X}|^{-1}\cdot\text{Tr}\left(\Phi_t^T A^{-1} \frac{d\Phi_t}{dt}\right) \\
    &= 2\eta_\Phi |\mathcal{X}|^{-3} \cdot \text{Tr}\left(\Phi_t^T A^{-1} \left(I-A\Phi_t\left(\Phi_t^TA\Phi_t\right)^{-1}\Phi_t^T\right) A A^{-1} (A^{-1})^T A\Phi_t \left(\Phi_t^TA^T\Phi_t\right)^{-1}\right) \\
    &= 2\eta_\Phi |\mathcal{X}|^{-3} \cdot \text{Tr}\left(\Phi_t^T (\sqrt{A})^{-1} \left(I-\sqrt{A}\Phi_t\left(\Phi_t^TA\Phi_t\right)^{-1}\Phi_t^T\sqrt{A}\right)  (\sqrt{A})^{-1}\Phi_t LL^T\right) \\
    &= 2\eta_\Phi |\mathcal{X}|^{-3} \cdot \text{Tr}\left(\Phi_t^T (\sqrt{A})^{-1} \left(I-\sqrt{A}\Phi_t\left(\Phi_t^TA\Phi_t\right)^{-1}\Phi_t^T\sqrt{A}\right)  \underbrace{(\sqrt{A})^{-1}\Phi_t L}_{\eqqcolon v}L^T\right) \\
    &= 2\eta_\Phi |\mathcal{X}|^{-3} \cdot \text{Tr}\left(v^T \left(I-\underbrace{\sqrt{A}\Phi_t\left(\Phi_t^TA\Phi_t\right)^{-1}\Phi_t^T\sqrt{A}}_{\eqqcolon M}\right) v \right) \\
\end{align*}
Note the matrix $M$ is a projection matrix which satisfies $M^T=M,M^T=M$. This means $\text{Tr}\left(v^T(I-M)v\right)\geq 0$ for any matrix $v$. With the above definitions, we can rewrite the update
\begin{align*}
    \frac{d\Phi_t}{dt} = \sqrt{A} (I-M)vL^T.
\end{align*}
Now, if $\frac{d\Phi_t}{dt} \neq 0$, this implies $(I-M)v\neq 0$ since both $LL^T$ and $A$ are PD. This further implies $\frac{df}{dt}$ is strictly positive. Meanwhile, if $ \frac{d\Phi_t}{dt}=0$, this means $(I-M)v=0$, which further implies $\frac{df}{dt}=0$. 

\end{proof}

\corocritical*
\begin{proof}
Define $A=D^\pi(I-\gamma P^\pi)$. 
When $\Phi_0^T A \Phi_0$ is assumed invertible, and since $A$ is PD, $\Phi$ must be of rank $k$. By \cref{lemmaconstant}, the rank of $\Phi$ is preserved over time and hence $\Phi_t^T A\Phi_t$ is always invertible along the ODE dynamics. This means the conditions for \cref{lemmacritical} is are all satisfied and hence,
\begin{align*}
    D^\pi R^\pi (D^\pi R^\pi)^T \Phi \in \text{span}\left(A\Phi\right).
\end{align*}
When $P^\pi$ is symmetric and $R^\pi(R^\pi)^T=I$, we deduce $D^\pi=|\mathcal{X}|^{-1}I$ and the following specialized condition holds,
\begin{align*}
    \Phi \in \text{span}\left(A\Phi \right).
\end{align*}
Since $\Phi$ is rank $k$, the above condition implies there exists an invertible matrix $B\in\mathbb{R}^{k\times k}$,
\begin{align*}
    \Phi = A\Phi B.
\end{align*}
This in turn implies $A\Phi = \Phi B^{-1}$ and equivalently, $A\Phi \in \text{span}(\Phi)$. When $P^\pi$ is symmetric, $D^\pi=|\mathcal{X}|^{-1}I $ and we have from before,
$
    \Phi - \gamma P^\pi \Phi \in \text{span}(\Phi)$.
This implies $P^\pi \Phi = \gamma^{-1} \Phi (I-B)$ and hence $P^\pi \Phi \in \text{span}(\Phi)$.

Now, assume $P^\pi\Phi \in \text{span}(\Phi)$, we seek to show that $\Phi$ is a critical point of the dynamics. The assumption directly implies that there exists matrix $B\in\mathbb{R}^{k\times k}$ such that
$
    P^\pi \Phi = \Phi B.
$.
This implies
\begin{align*}
    (I-\gamma P^\pi) \Phi = \Phi (I-\gamma B).
\end{align*}
Now, since $\Phi^T\Phi=\Phi_0^T\Phi_0$ and because $\Phi_0$ is initialized of rank $k$, $\Phi$ is of rank $k$ too. Because the matrix $(I-\gamma P^\pi)$ is full rank, it must be that $(I-\gamma B)$ is of rank $k$ as well, and hence invertible. This gives
\begin{align*}
    \Phi = (I-\gamma P^\pi) \Phi (I-\gamma B)^{-1} \in \text{span}(A\Phi).
\end{align*}
\end{proof}

\lemmaexample*
\begin{proof}
Recall $R_n^\pi$ to be the $n$-th column of the reward function matrix $R^\pi$. We have
\begin{align*}
    R^\pi(R^\pi)^T = \sum_{n=1}^h R_n^\pi(R_n^\pi)^T 
\end{align*}
Consider the $(i,j)$-th component of the matrix
\begin{align*}
    \left[R^\pi(R^\pi)^T\right](i,j) = \sum_{n=1}^h \left[R_n^\pi(R_n^\pi)^T \right](i,j) = \sum_{n=1}^h R_{ni}^\pi R_{nj}^\pi,
\end{align*}
where $R_{ni}^\pi$ denotes the $i$-th component of the column vector $R_n^\pi$ for $1\leq i\leq |\mathcal{X}|$. When $i=j$, the summation is over a set of $h$ i.i.d. random variables each with mean $\sigma^2/h$. When $i\neq j$, the summation is over
a set of $h$ i.i.d. random variables each with mean zero. The law of large number concludes the proof.
\end{proof}

\section{Discussion on related work} \label{appendix:related}
Here, we provide a more comprehensive discussion on prior work.

\paragraph{From linear TD to end-to-end linear TD.} Since the introduction of TD-learning \citep{sutton1988learning}, there have been numerous efforts at understanding the algorithm. The seminal work of \citep{tsitsiklis1996analysis} proposes to understand linear TD through its expected continuous time behavior, characterized by a linear ODE system. A number of follow-up work has applied similar techniques to understanding the stability of, e.g.,  Q-learning \citep{melo2008analysis}, off-policy TD \citep{sutton2016emphatic}, among others problems.

Much of the prior work has focused on the classic linear TD setup, where the representations are assumed fixed throughout learning.
Closely related to our work is \citep{lyle2021effect} where they proposed to understand the learning dynamics of end-to-end linear TD through its corresponding ODE system. Since such an ODE system is highly non-linear, it is more challenging to provide generic characterizations without restrictive assumptions. \citet{lyle2021effect} bypasses the non-linearity issue by essentially assuming a fixed weight parameter $w_t\equiv w$. In light of \cref{eq:deep-linear-td-ode}, they study the following dynamics
\begin{align*}
    \frac{d\Phi_t}{dt} = \eta_\Phi \cdot D^\pi\left(R^\pi -(I-\gamma P^\pi)\Phi_t w\right)w^T
\end{align*}
which effectively reduces  to a linear system in $\Phi_t$ and is more amenable to analysis.

Our work differs in a few important aspects. Firstly, our analysis adheres strictly to the vanilla end-to-end linear TD  (\cref{eq:deep-linear-td-ode}) or two-time scale end-to-end linear TD  (\cref{eq:deep-linear-td-ode-two-time-scale}), without imposing the constant weight assumption as in \citep{lyle2019comparative}. Our analysis is also slightly more general, as it consists in constructing a few scalar functions that characterize the dynamics. This is more applicable to general ODEs where obtaining exact solutions is not tractable.

\paragraph{TD with non-linear function approximations.} Going beyond the linear parameterization, a number of prior work considered the problem of non-linear TD where the value function is represented as some generic smooth non-linear functions. For example, \citet{maei2009convergent} established the first convergence result of smooth function classes with two time-scale updates. A number of concurrent work \citet{agazzi2022temporal,cai2019neural,sirignano2022asymptotics}, each with subtly different theoretical setups in place, showed convergence of TD-learning in the over-parameterized regimes, e.g., when the value functions are approximated by very wide neural networks. In this case, representations are confined to be near the initialized values and cannot evolve much over time. To address the limitation, \citet{zhang2020can} generalized the result from \citep{cai2019neural} and showed that in the over-parameterized regimes, the representations converge to the near optimal ones with the lowest projected value prediction error. Note that \cref{theoremvalue} echoes this result and provides a complementary result under the end-to-end linear TD setup.

Another closely related work is \citep{brandfonbrener2019geometric}, where they established the convergence behavior of TD-learning with smooth homogeneous functions. A key requirement underlying their result is that the MDP is sufficiently reversible, which echos the assumption we make in \cref{theoremvalue}. However, under their framework there is no clear notion of representation as defined in the bi-linear case. An interesting future direction would be to study representation dynamics with non-linear function approximations.

\paragraph{Representations learning via self-predictive learning.} Motivated by value-based TD-learning, self-predictive learning directly employs the notion of bootstrapping to the representation space, which has produced a number of empirically successful implementations \citep{guo2020bootstrap,schwarzer2021dataefficient,guo2020bootstrap}. The high level idea is to minimize the prediction error
\begin{align*}
    \left\lVert P\left(\phi_{x_t}\right) - \text{sg}\left(\phi_{x_{t+1}}\right) \right\rVert_2^2,
\end{align*}
where $P:\mathbb{R}^k\rightarrow\mathbb{R}^k$ is the learned transition dynamics in the representation space. Recently, Tang et al. \citep{tang2022understanding} proposes to understand the behavior of self-predictive learning through its corresponding ODE. From their discussion, we note that two time-scale dynamics is a generic way to enforcing optimization constraints. Our results in \cref{sec:spectral} build on this and show a similar characterization for the end-to-end linear TD dynamics.

\paragraph{Two time-scale learning dynamics.} The idea of two time-scale learning dynamics is not new in TD-learning. \citet{levine2017shallow} proposed a combination of LSTD updates for the linear weights on top of DQN representations, which are shaped by gradient descents. In continuous time, this is effectively the two time-scale dynamics (Eqn~\eqref{eq:deep-linear-td-ode-two-time-scale}). While their discussion is empirically motivated, we focus on the representation learning aspect of such a learning dynamics.

\paragraph{Auxiliary tasks.} In deep RL literature, it has been empirically observed that certain extra training objectives, normally referred to as \emph{auxiliary tasks}, are useful for improving the agent performance on the objective of interest such as optimizing the cumulative returns \citep{jaderberg2016reinforcement,bellemare2019geometric,dabney2021value}. Though the conventional wisdom is that such auxiliary tasks are useful in shaping representations, it is generally difficult to characterize exactly what representations are induced, despite some recent theoretical efforts \citep{lyle2021effect}. Our results demonstrate the potential benefits of random reward predictions when combined with TD-learning \citep{dabney2021value}, by connecting the learning dynamics to gradient-based spectral decomposition of the transition matrix.

\section{Experiment details}
\label{appendix:exp}

We provide further experimen details on the tabular and deep RL experiments in the main paper.

\subsection{Tabular experiments}
All tabular experiments are conducted on random MDPs with $|\mathcal{X}|=30$ states. The general transition matrix $P^\pi$ is generated as follows,
\begin{align*}
    P^\pi = \alpha P_\text{perm} + (1-\alpha) P_\text{ds},
\end{align*}
where $P_\text{ds}$ is a randomly sampled doubly-stochastic matrix and $P_\text{perm}$ is a randomly generated permutation matrix. We set $\alpha=0.95$ so that $P^\pi$ is likely to violate the reversibility assumption. In the symmetric case, the transition matrix is computed as
\begin{align*}
    P^\pi = (P_\text{ds} + P_\text{ds}^T) / 2.
\end{align*}
The doubly-stochastic matrix is randomly generated based on the procedure in \citep{tang2022understanding}. Each entry of the reward function $R^\pi$ is randomly sampled from $\mathcal{N}(0,1)$

\paragraph{Normalized trace objective.}
For any matrix $P^\pi$, the trace objective is computed as $f(\Phi_t)=\text{tr}\left(\Phi_t^T A\Phi_t\right)$ for $A=(I-\gamma P^\pi)^{-1}$. To calculate the normalized objective, we compute the baseline value $\bar{f}$ as the sum of the top $k$ eigenvalues of the symmetrized matrix $(A+A^T)/2$. The normalized trace objective is $\bar{f}(\Phi_t)\coloneqq f(\Phi_t) / \bar{f}$. When $P^\pi$ is symmetric, $\bar{f}(\Phi_t)$ is upper bounded by $1$.

\paragraph{Unrolling the ODE dynamics.}
All results are based on the solving the exact ODE dynamics, using the Scipy ODE solver \citep{2020SciPy-NMeth}. Throughout, we initialzie the representation $\Phi_t$ matrix as orthonormal. We start by generating $k$ column vectors of size $\mathbb{R}^{|\mathcal{X}|}$, with each entry randomly generated from $\mathcal{N}(0,1)$. Then we apply the Gram-Schmidt orthogonalization procedure to the columns to compute the initialized representation.

\subsection{Deep RL experiments}

We use DQN \citep{mnih2013} as a baseline and evaluate all algorithmic variants over $15$ games in the Atari game suite \citep{bellemare2013arcade}. Our testbed is a subset of $15$ Atari games \citep{bellemare2013arcade} on which it has been shown that DQN can achieve reasonable performance, see e.g., \citep{schaul2015prioritized} for how they select the subset of the games: \texttt{asterix}, \texttt{boxing}, \texttt{breakout}, \texttt{freeway}, \texttt{gopher}, \texttt{gravitar}, \texttt{hero}, \texttt{ms pacman}, \texttt{pong}, \texttt{qbert}, \texttt{riverraid}, \texttt{seaquest}, \texttt{skiing}, \texttt{space invaders} and \texttt{venture}. 

\paragraph{Random reward functions.} To generate random reward functions $R_i^\pi(x,a),1\leq i\leq h$, we initialize $h$ networks randomly and directly use the outputs This is similar to the practice of \citep{dabney2021value} except that we do not apply additional activations (such as $\text{sigmoid}$ or \text{tanh} as done in \citep{dabney2021value}) on top. Throughout, we use the Adam optimizer \citep{kingma2014adam} with a fixed learning rate $\eta$, see \citep{mnih2013} for details of other hyper-parameters. 

\paragraph{Hyper-parameters.}
We tune the learning rate $\eta\in\{0.00025,0.0001,0.00005\}$ as suggested in \citep{dabney2021value}. The default DQN uses $\eta=0.00025$. We find that at $\eta=0.0001$ the tuned DQN performs the best. For the auxiliary task, we tune the number of random rewards $h\in\{4,16,64,256\}$. We find that $h=16$ performs slightly better than other alternatives.

\begin{figure}[t]
    \centering
    \includegraphics[keepaspectratio,width=.45\textwidth]{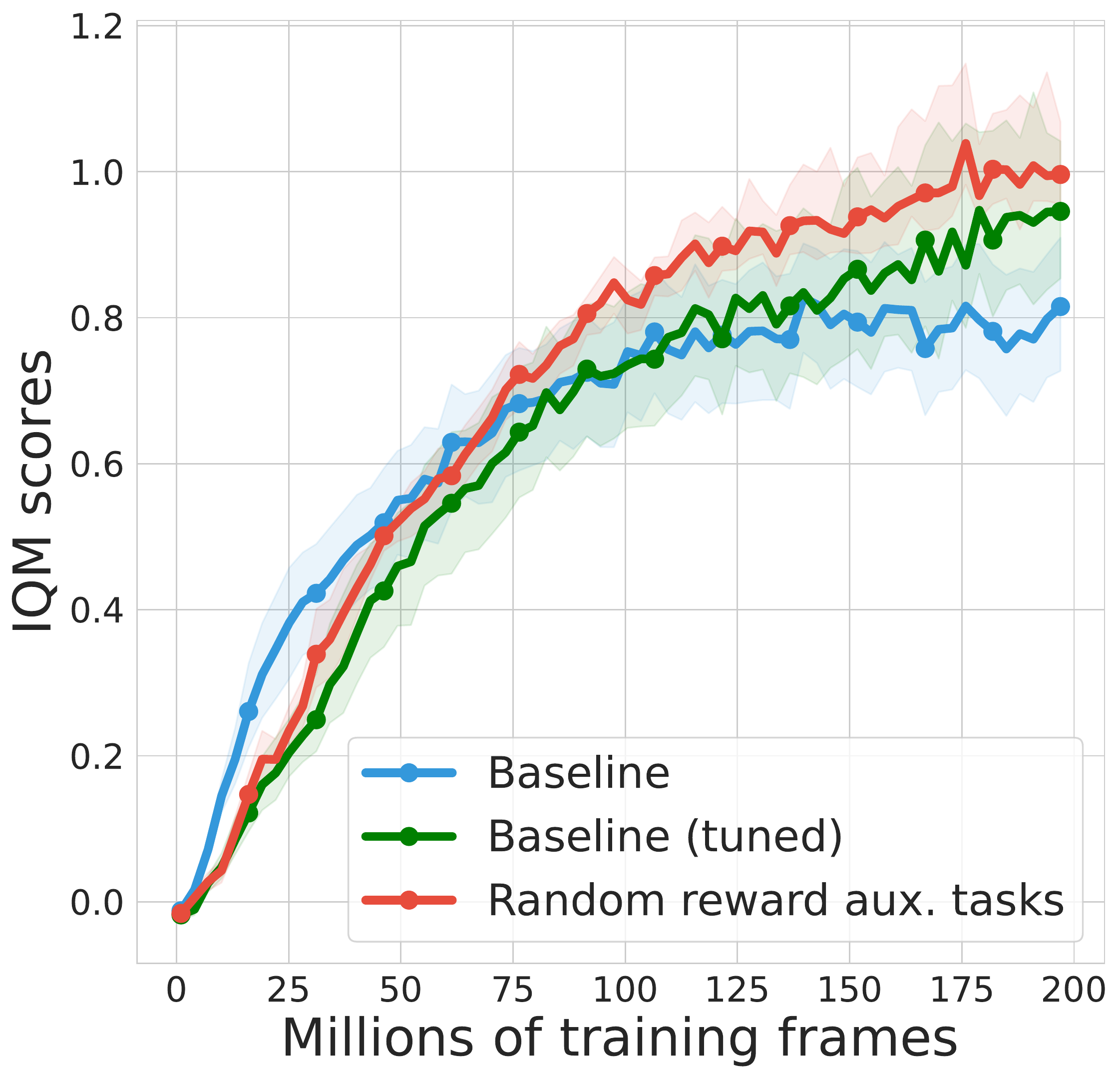}
     \caption{Comparison of training curves of IQM human normalized scores across 15 Atari games. We compare the baseline DQN, DQN with tuned learning rate and DQN with random value predictions as auxiliary tasks. The shaded areas show the $95\%$ bootstrapped confidence intervals averaged over $3$ seeds. The random value prediction tasks seem to provide marginal benefits over the tuned DQN. The IQM score is computed by truncating the top and bottom $25\%$ of scores, averaged across all games and all seeds \citep{agarwal2021deep}.}
    \label{fig:atari-median}
\end{figure}

\end{appendix}

\end{document}